

\documentclass[letterpaper, 10 pt, conference]{ieeeconf}

\IEEEoverridecommandlockouts                              
\overrideIEEEmargins                                     
\pdfminorversion=4 

\newif\ifcomment
\commenttrue   

\usepackage{amsmath, amsfonts} 
\usepackage{algorithm}
\usepackage{algpseudocode}
\usepackage{mathtools}

\usepackage{graphicx}
\usepackage{siunitx}
\usepackage{subcaption}
\usepackage{booktabs}
\usepackage{xcolor}
\usepackage{textcomp}
\usepackage{gensymb} 
\usepackage{multirow} 
\usepackage{comment}
\usepackage{verbatim}
\usepackage[skip=0pt, font=footnotesize]{caption}
\usepackage{bm}
\usepackage[noadjust]{cite}
\usepackage{makecell}
\usepackage{tabularx}
\usepackage{pifont}
\usepackage{layouts}

\usepackage{physics}

\usepackage{amsthm} 
\usepackage{lipsum}

\usepackage{amssymb}
\usepackage[english]{babel}
\usepackage[hidelinks]{hyperref}

\newtheoremstyle{colon}%
{}
{}
{\itshape}
{}
{\bfseries}
{:}
{ }
{}
\theoremstyle{colon}
\newtheorem{theorem}{Theorem}

\newtheorem{lemma}{Lemma}
\newtheorem{problem}{Problem}
\newtheorem{model}{Model}

\theoremstyle{remark}

\IEEEaftertitletext{\vspace{-0.5\baselineskip}}

\makeatletter
\renewcommand{\@IEEEsectpunct}{ \,}
\makeatother



\makeatletter 
\newcommand\Label[1]{&\refstepcounter{equation}(\theequation)\ltx@label{#1}&}
\makeatother


\definecolor{peru}{rgb}{0.803921568627451, 0.5215686274509804, 0.24705882352941178}
\definecolor{violet}{rgb}{0.9333333333333333, 0.5098039215686274, 0.9333333333333333}
\definecolor{greeN}{rgb}{0.17254901960784313, 0.6274509803921569, 0.17254901960784313}
\definecolor{stage0}{RGB}{187,248,255}
\definecolor{stage1}{RGB}{250,255,187}
\definecolor{stage2}{RGB}{187,255,196}
\definecolor{stage0_dark}{RGB}{0,180,200}
\definecolor{stage1_dark}{RGB}{200,180,0}
\definecolor{stage2_dark}{RGB}{0,200,0}

\definecolor{centerline}{RGB}{51,51,255}
\definecolor{exterior}{RGB}{255,153,51}
\definecolor{interior}{RGB}{0,153,0}
\definecolor{slalom}{RGB}{255,51,51}

\def\anonymous{1} 



\newcommand\ringring[1]{%
  {
   \mathop{\kern0pt #1}\limits^{
     \vbox to-1.85ex{
       \kern-2ex 
       \hbox to 0pt{\hss\normalfont\kern.1em \r{}\kern-.45em \r{}\hss}%
       \vss 
     }
   }
  }
}

%
%
\setlength{\textfloatsep}{0pt}  

\newcolumntype{M}[1]{>{\centering\arraybackslash}m{#1}}

\makeatletter
\def\endthebibliography{%
	\def\@noitemerr{\@latex@warning{Empty `thebibliography' environment}}%
	\endlist
}
\makeatother

\IEEEoverridecommandlockouts
\usepackage{tikz}
\usepackage{textcomp}
\usepackage{hyperref}
\usepackage{lipsum}

\newcommand\copyrighttext{%
	\footnotesize Published in IEEE Conference on Decision and Control (CDC), Singapore, 2023.\newline
	 \textcopyright 2023 IEEE. Personal use of this material is permitted.
	Permission from IEEE must be obtained for all other uses, in any current or future media, including reprinting/republishing this material for advertising or promotional purposes, creating new collective works, for resale or redistribution to servers or lists, or reuse of any copyrighted component of this work in other works.}
\newcommand\copyrightnotice{%
	\begin{tikzpicture}[remember picture,overlay]
		\node[anchor=south,yshift=10pt] at (current page.south) {\fbox{\parbox{\dimexpr\textwidth-\fboxsep-\fboxrule\relax}{\copyrighttext}}};
	\end{tikzpicture}%
}

\title{\LARGE \bf
Pose-Following with Dual Quaternions}
\if\anonymous1
\author{Jon Arrizabalaga$^{1}$ and Markus Ryll$^{1,2}$
	\thanks{$^{1}$Autonomous Aerial Systems, School of Engineering and Design,  Technical University of Munich, Germany. E-mail: {\tt\small jon.arrizabalaga@tum.de} and {\tt\small markus.ryll@tum.de}}%
	\thanks{$^{2}$Munich Institute of Robotics and Machine Intelligence (MIRMI), Technical University of Munich}
}
\fi
\begin{document}

\maketitle
\ifcomment
    \copyrightnotice
\fi
\begin{abstract}
This work focuses on pose-following, a variant of path-following in which the goal is to steer the system’s position and attitude along a path with a moving frame attached to it. Full body motion control, while accounting for the additional freedom to self-regulate the progress along the path is an appealing trade-off. Towards this end, we extend the well-established dual quaternion based pose-tracking method into a pose-following control law. Specifically, we derive the equations of motion for the full pose error between the geometric reference and the rigid body in the form of a dual quaternion and dual twist, and subsequently, formulate an almost globally asymptotically stable control law. The global attractivity of the presented approach is validated in a spatial example, while its benefits over pose-tracking are showcased through a planar case-study.
\end{abstract}
\begin{flushleft}
\textbf{Code}: \url{https://github.com/jonarriza96/pfdq}\\
\textbf{Video}: \url{https://youtu.be/TQig2j90Ijc}
\end{flushleft}
\vspace{-1mm}
\section{INTRODUCTION}
\noindent Simultaneous attitude and position -- \emph{pose} -- control of a rigid body in three-dimensional space is fundamental to many applications, such as for autonomous vehicles, spacecrafts or robotic manipulators. 

The simplest approach to address the pose control problem is decoupling it into two separate subproblems~\cite{fjellstad1994position,stansbery2000position}. On the one hand, a \emph{position controller} drives the translational motions, and on the other hand, an \emph{attitude controller} regulates the rotational behavior. This separation relates to the de facto representation of the rigid body dynamics, in which the translational and angular motions are expressed separately (as planar and spatial examples, see eq. 7 in \cite{arrizabalaga2021caster} and eq. 1 in \cite{arrizabalaga2022towards}). However, such partitioning poses a challenge to effectively control the interdependence between the rotational and translational dynamics.

An alternative to this decoupling is representing the system dynamics globally on the configuration manifold of the special Euclidean group $\mathrm{SE}(3)$. Doing so allows for leveraging the group structure to first avoid singularities and second extend proportional derivative (PD) feedback controllers, for the pose-tracking problem~\cite{bullo1995proportional}. Control methods inspired by these findings have shown very promising results within a plethora of robotic platforms, such as quadrotors~\cite{lee2010geometric}, robotic manipulators~\cite{figueredo2013robust}, walking robots~\cite{bledt2018cheetah} and spacecrafts~\cite{filipe2015adaptive}. 

To represent a rigid body in $\mathrm{SE}(3)$, it is customary to combine a three-dimensional vector of the Cartesian coordinates with either a rotation matrix -- resulting in a homogeneous transformation matrix (HTM) -- or a unit quaternion. A less common choice are \emph{unit dual quaternions}. 

Unit quaternion parameterization can be summarized into five advantages: First, when compared to HTMs, unit dual quaternions offer a more compact representation of $\mathrm{SE}(3)$, as it requires only eight parameters (against twelve) to describe the motions of a rigid body -- rotations and translations -- in a singularity-free and global manner~\cite{yang1963application}. Second, dual quaternion multiplications are computationally more efficient than HTM multiplications~\cite{figueredo2013robust}. Third, the utilization of unit dual quaternions is comparatively simpler than the use of Cartesian coordinates and unit quaternions. This can be attributed to the fact that a series of rigid movements can be expressed as a sequence of dual quaternion multiplications, whereas in the other case, the computation of rotation and position is performed independently (for further details, refer to~\cite{figueredo2013robust,filipe2013rigid}). Fourth, unit dual quaternions yield two closed-loop equilibrium points associated to a quaternion's double coverage of $\mathrm{SO}(3)$, both of which represent the identity rotation matrix, while rotation matrices generate a minimum of four closed-loop equilibrium points, with only one of them relating to the identity~\cite{filipe2015adaptive}.  Fifth, in contrast to alternative techniques within $\mathrm{SE}(3)$, such as~\cite{lee2010geometric}, unit dual quaternions exclusively rely on a single error function, rather than the need for two separate functions addressing position and attitude errors. 

In light of these features, unit dual quaternions have been applied across a wide range of disciplines, including but not limited to inertial navigation~\cite{wu2005strapdown}, state estimation~\cite{zu2014distributed}, inverse kinematics~\cite{gan2008dual}, computer graphics~\cite{kavan2006dual} and computer vision~\cite{daniilidis1999hand}.

In the context of pose control, akin to~\cite{bullo1995proportional}, the authors in~\cite{han2008kinematic} and~\cite{wang2012geometric} broadened PD-alike feedback controllers to encompass the Lie group of unit dual quaternions using its logarithmic mapping. This resulted in a globally exponentially stable kinematic control law for pose regulation or tracking. These outcomes were subsequently expanded in~\cite{wang2013unit} to also account for rigid body dynamics. Since these findings, the unit dual quaternion-based pose-tracking problem has received considerable attention in the literature. To name a few, the need for linear and angular velocity feedback was dropped in~\cite{filipe2013rigid}, a backstepping control technique to account for robustness was proposed in~\cite{zhang2011robust}, adaptive control allowed for simultaneous pose-tracking and parameter identification in~\cite{filipe2015adaptive}, formation flying was addressed in~\cite{wang2012dual}, and optimal control variants in the form of linear quadratic regulator (LQR) and Model Predictive Control (MPC) were formulated in \cite{marinho2015dual} and \cite{lee2017constrained}, respectively.

Despite the achievements, all these methods exclusively focus on \emph{pose-tracking}, i.e., they track a time-varying position and attitude reference. However, not all problems fit in such a description. For a more intuitive understanding of this concept we use an illustrative example from~\cite{faulwasser2015nonlinear}: When aiming for precise steering of a robot tool along a geometric reference, the primary concern is to minimize the deviation between the reference and the tool, while the velocity to move along the reference is of secondary interest and can be modified to enhance accuracy. In other words, the problem is not centered on tracking a pre-defined time-varying reference, but rather on leveraging the velocity to traverse the reference as an additional degree of freedom. Such control problems are denoted as \emph{path-following}. Its appealing properties for a wide range of applications, accompanied by the fact that it is agnostic to the fundamental limitations of reference-tracking~\cite{aguiar2005path}, account for the significant attention path-following has received in literature. A detailed description of existing approaches can be found in~\cite{faulwasser2015nonlinear,hung2023review}. Among those, most of the existing path-following methods omit the rotational dynamics and focus on path convergence of the translational dynamics. 

To close this gap, we assume that the geometric reference consists of a desired path with a moving coordinate frame associated to it, and we define \emph{pose-following} as a generalization of path-following in which the goal is to steer the system's position and attitude\footnote{In contrast to~\cite{hung2023review, plaskonka2015different}, where the rotational convergence is reduced to the heading $\psi \in \mathbb{R}$ and the remaining two euler angles are left unattended, we seek \emph{true attitude convergence}, i.e. $q\in\mathrm{SO}(3)$.} along the reference. This begs the question of how to formulate such a pose-following method.

To answer this question, in this paper we derive a unit dual quaternion-based pose-following control approach for rigid body dynamics. To do so, we take advantage of the previously mentioned benefits of unit dual quaternions, namely singularity-free, compactness, computational efficiency and the logarithmic mapping associated with the Lie group, allowing us to extend the PD-alike feedback control from pose-tracking to pose-following. As a result, the freedom and versatility of path-following is augmented to full body motions, i.e., translations, as well as rotations. To the best of our knowledge, this is the first work that explicitly attempts to follow upon both the longitudinal and angular coordinates. 

More specifically, the presented scheme consists of the following contributions:
\begin{enumerate}
    \item We derive the equations of motion for the full pose error between the geometric reference and the rigid body in the form of a dual quaternion and dual twist.
    \item We extend the original control law to account for nonlinearities that arise from introducing auxiliary states associated with pose-following. Besides that, we design the additional degree of freedom either to ensure convergence to a desired velocity profile or as feedback. Therefore, the progress along the reference is versatile in the sense that it either accepts any velocity profile or can be utilized to incite a desired behavior around the geometric reference.
    \item We prove that the presented control law is almost globally asymptotically stable\footnote{"almost" globally asymptotically stability is the optimal performance achievable by a continuous controller for rotational motion, owing to the fact that the group of rotation matrices SO(3) constitutes a compact manifold~\cite{bhat2000topological}.}. When doing so, we take special care of the two equilibria problem, by introducing a switch that guarantees convergence to the closest equilibrium point. 
\end{enumerate}

The remainder of this paper is structured as follows: Section~\ref{sec:problem_statement} introduces the pose-following problem. Section~\ref{sec:solution_approach} presents the solution proposed in this paper, by revisiting the unit dual quaternion-based algebra, transforming the error dynamics into dual quaternion and dual twist form, deriving the control law and conducting a stability analysis. Experimental results are shown in Section~\ref{sec:experiments} before Section~\ref{sec:conclusions} presents the conclusions.

\vspace{2mm}
\noindent\textit{Notation:} We will use $\dot{(\cdot)} = \dv{(\cdot)}{t}$ for time derivatives and $\mathring{(\cdot)} = \dv{(\cdot)}{\theta}$ for differentiating over pose-parameter $\theta$. We denote three-dimensional vectors in bold $\bm{v}$, dual numbers as $a +\epsilon$b and dual quaternions as $\hat{q}$. We define $\hat{I}$ as $[1,0,0,0]+\epsilon[0,0,0,0]$.
\vspace{2mm}

\section{THE POSE-FOLLOWING PROBLEM}\label{sec:problem_statement}
\noindent Classical path-following approaches steer a system's position, while leaving the attitude unattended. In this work, we tighten the original path-following problem by focusing on full rigid body motions, i.e, position and attitude following.

\subsection{Rigid body dynamics}
\noindent The three-dimensional rigid body dynamics in the body frame are given by
\begin{subequations}\label{eq:rigid_body_dynamics}
\begin{gather}
    \Ddot{\bm{p}}^{b}(t)= \bm{f}^b(t)m^{-1}\,,\\
    \dot{\bm{\omega}}^{b}(t) = \text{J}^{-1} \left(\bm{\tau}^b(t) - \bm{\omega}^b(t) \times \text{J}\,\bm{\omega}^b(t) \right)\,,
\end{gather}
\end{subequations}
where $\{\bm{p}^b,\bm{\omega}^b, \bm{f}^b, \bm{\tau}^b\}\in\mathbb{R}^3$ refer to the rigid body's position, angular velocity, control forces and control torques, while $m\in\mathbb{R}$ and $J\in\mathbb{R}^{3\cross3}$ are the mass and inertia matrix. From now onward, since frame superscripts remain constant, they will be dropped. By taking position $\bm{p}$, longitudinal velocity $\dot{\bm{p}}$, attitude $q \in \mathrm{SO}(3)$ and angular velocity $\bm{\omega}$ as states $\bm{x}(t)=\left[\bm{p}(t), \dot{\bm{p}}(t), q(t), \bm{\omega}(t)\right]$ with forces $\bm{f}$ and torques $\bm{\tau}$ as inputs $\bm{u}(t)=\left[\bm{f}(t),\bm{\tau}(t)\right]$, and introducing the respective kinematic relationships, the dynamics in~\eqref{eq:rigid_body_dynamics} can be written in the standard form:
\begin{equation}\label{eq:rigid_body_f}
    \dot{\bm{x}}(t) = f(\bm{x}(t), \bm{u}(t))\,.
\end{equation}

\subsection{Geometric reference representation}
\noindent Let $\Gamma$ refer to a geometric reference and be defined as a path with a moving frame attached to it. Its respective desired position and attitude are given by two functions, $\bm{p}_d\,:\,\mathbb{R}\mapsto\mathbb{R}^3$ and $q_d\,:\,\mathbb{R}\mapsto\mathrm{SO}(3)$, that depend on pose-parameter $\theta$ and are at least $\mathcal{C}^2$:
\begin{equation}\label{eq:geom_ref}
    \Gamma = \{\theta \in[\theta_0,\theta_f] \subseteq\mathbb{R}\mapsto\bm{p}_d(\theta) \in \mathbb{R}^3, q_d(\theta) \in \mathrm{SO}(3)\}
\end{equation}
It should be noted that the $C^2$ requirement for $q_d$ enables the calculation of the desired angular velocity $\bm{\omega}_d(\theta)\,:\,\mathbb{R}\mapsto\mathbb{R}^3$ from its kinematic equations.

\subsection{Problem statement}
\noindent To incorporate the additional freedom inherited from path-following, we augment the rigid body dynamics in~\eqref{eq:rigid_body_f} by adding the pose-parameter $\theta(t)$ and its first time derivative $\dot{\theta}(t)$ as virtual states and assign the second time derivative $\ddot{\theta}(t)$ as a virtual input. The resulting system is denoted as
\begin{equation}\label{eq:rigid_body_f_aug}
    \dot{\bm{x}}_\Gamma(t) = f_\Gamma(\bm{x}_\Gamma(t), \bm{u}_\Gamma(t))\,,
\end{equation}
where $\bm{x}_\Gamma(t)=\left[\bm{p}(t), \dot{\bm{p}}(t), q(t), \bm{\omega}(t), \theta(t), \dot{\theta}(t)\right]$ and $\bm{u}_\Gamma(t)=\left[\bm{f}(t),\bm{\tau}(t), \ddot{\theta}(t)\right]$. The augmented system $f_\Gamma$ contains two additional equations of motion that relate to the integration chain of the pose-parameter $\theta(t)$, implying that the virtual input $\ddot{\theta}(t)$ is associated to its acceleration. Consequently, the time evolution $\theta(t)$, and thereby the pose reference $\{\bm{p}_d(\theta(t)),q_d(\theta(t))\}$, are controlled via the virtual input $\ddot{\theta}(t)$. This leads to the definition of the \emph{pose-following error} as
\begin{equation}
    \bm{e}_\Gamma(t) = \triangle \left[\{\bm{p}(t),q(t)\},\{\bm{p}_d(\theta(t)),q_d(\theta(t))\}\right]\,,
\end{equation}
where $\triangle\,:\,\{\mathbb{R}^3,\,\mathrm{SO}(3)\}\cross\{\mathbb{R}^3,\,\mathrm{SO}(3)\}\mapsto\mathbb{R}$ is a function that outputs the deviation between the rigid body's pose and reference pose, and will only be $0$ if both are equal, i.e., $\triangle\left[a,b\right] = 0 \iff a=b$. Due to the structure of $\mathrm{SE}(3)$, this function is dependent on the control design approach, and thus, will be defined in the upcoming Section~\ref{sec:solution_approach}.
For the remainder of this work, we address the following problem:
\begin{problem}[\bfseries Pose-Following]
Given the geometric reference $\Gamma$ in~\eqref{eq:geom_ref} and the augmented rigid body dynamics in~\eqref{eq:rigid_body_f_aug}, formulate a controller $\bm{u}_\Gamma(t) = \left[\bm{f}(t),\bm{\tau}(t),\ddot{\theta}(t)\right]$ that fulfills:
\begin{itemize}
    \item[P1.1] \textbf{Pose convergence:} The pose-following error vanishes asymptotically $\lim_{t\to\infty} \bm{e}_\Gamma(t) = 0\,$.
    \item[P1.2] \textbf{Convergence on pose-parameter:} The system converges to the end of the geometric reference $\lim_{t\to\infty} \theta_f-\theta(t) = 0$.
\end{itemize}
\end{problem}
\noindent For specific applications, it might be of interest to traverse the reference according to a desired velocity profile $\theta_{vd}(\theta(t))$. For example, inspection and manufacturing processes might require a lower traverse velocity at critical sections of the geometric reference. For this reason, in contrast to~\cite{faulwasser2015nonlinear}, instead of directly depending on time, our velocity profile $\theta_{vd}$ depends on the pose-parameter $\theta(t)$. Notice that this problem differs from reference tracking, for further details refer to Sec II. in~\cite{faulwasser2015nonlinear}. Remaining consistent with the existing literature, we denote this problem as \emph{pose-following with velocity assignment}.
\begin{problem}[\bfseries Pose-Following with velocity assignment]
Given the geometric reference $\Gamma$ in~\eqref{eq:geom_ref} and the augmented rigid body dynamics in~\eqref{eq:rigid_body_f_aug}, formulate a controller $\bm{u}_\Gamma(t) = \left[\bm{f}(t),\bm{\tau}(t),\ddot{\theta}(t)\right]$ that fulfills:
\begin{itemize}
    \item[P2.1] \textbf{Pose convergence:} P1.1 from Problem 1.
    \item[P2.2] \textbf{Velocity convergence:} The velocity of the pose-parameter converges to a desired velocity profile $\lim_{t\to\infty} \dot{\theta}(t)-\theta_{vd}(\theta(t)) = 0$.
\end{itemize}
\end{problem}
\section{SOLUTION APPROACH}\label{sec:solution_approach}
\subsection{Mathematical preliminaries}
\noindent For the sake of making this paper self-contained we briefly recall the dual quaternion algebra. In doing so, we follow the notation and content of~\cite{wang2013unit}, which pioneered the use of unit dual quaternion-based pose-tracking for rigid body dynamics. To begin, we define the foundational concepts of the quaternion and the dual number, which serve as the building blocks of the dual quaternion. For more comprehensive information on these concepts, we refer the reader to~\cite{yang1963application,wang2012geometric,kenwright2012beginners}.

\subsubsection{Quaternions:} Quaternions extend the notion of a complex number to the four-dimensional space $\mathbb{R}^4$ and can be expressed as $q = a + b\,i + c\,j + d\,k = \left[s,\bm{v}\right]$, where $s$ is the \emph{scalar part}, $\bm{v}\in\mathbb{R}^3$ is the \emph{vector part} and $\{i,j,k\}$ is the standard $\mathbb{R}^4$ basis, i.e., $i^2 = j^2 = k^2 = -1$ and $ij = k,\,jk=i,\,ki=j$. When a three-dimensional vector is expressed as a quaternion with a zero scalar component, the term \emph{vector quaternion} is used. Furthermore, quaternions fulfilling $a^2 + b^2 + c^2 + d^2 = 1$ are named \emph{unit quaternions} and allow to describe rotations, i.e., a rotation around a unit axis $\bm{n}$ by an angle of $|\phi|<2\pi$ can be expressed as a unit quaternion in the form of $q = \left[\cos(\phi/2),\sin(\phi/2)\bm{n}\right]$. Additionally, unit quaternions constitute a Lie group $\mathcal{Q}_u$ over multiplication, and its logarithmic map is $\ln \hat{q} = \phi/2$ with $\phi=\left[0,\,|\phi|\bm{n}\right]$. Limiting to $\phi\in[0,\,2\pi)$ and defining $v$ as the Lie algebra of $\mathcal{Q}_u$, i.e., all logarithmic mappings of unit quaternions, the adjoint transformation is $\text{Ad}_q V = q \circ V \circ q^-1 = q \circ V \circ q^*$, where '$\circ$' is the quaternion multiplication and $V\in v$. 
\subsubsection{Dual numbers and vectors:} Dual numbers are defined as $\hat{a} = a + \epsilon b$ with $\epsilon^2=0$, $\epsilon\neq 0$ -- $\epsilon$ is nilpotent --, and $\{a,b\}\in\mathbb{R}$. In this case $a$ and $b$ are denoted as the \emph{real part} and \emph{dual part}, respectively. Dual vectors are a generalization of dual numbers, where both the real and dual part are three-dimensional vectors such that $\hat{\bm{a}} = \bm{a} + \epsilon\bm{b}$, where $\{\bm{a},\bm{b}\}\in\mathbb{R}^3$.

\subsubsection{Dual quaternions:} A dual quaternion features dual components instead of its regular ones, i.e., $\hat{q} = \left[\hat{s},\bm{\hat{v}}\right]$, where $\hat{s}$ is a dual number and $\bm{\hat{v}}$ is a dual vector. Following the introduced notation, a dual quaternion with a vanishing scalar part is named as a \emph{dual vector quaternion}. We define the dot product of two dual vector quaternions $\hat{v}=\left[\hat{0},\hat{\bm{v}}\right]$ with $\hat{\bm{v}} = \bm{v}_r + \epsilon\bm{v}_d$ and  $\hat{k}=[\hat{0},\hat{\bm{k}}]$ with $\hat{\bm{k}} = \bm{k}_r + \epsilon\bm{k}_d = \left[k_{r1},k_{r2},k_{r3}\right] + \epsilon\left[k_{d1},k_{d2},k_{d3}\right]$ as 
\begin{equation*}
\hat{k}\odot\hat{v} = \left[\hat{0},K_r\bm{v}_r\right] + \epsilon\left[\hat{0},K_d\bm{v}_d\right]
\end{equation*}
with $K_r = \text{diag}(k_{r1},k_{r2},k_{r3})$ and $K_d = \text{diag}(k_{d1},k_{d2},k_{d3})$. Alternatively, a dual quaternion can also be expressed as $\hat{q} = q_d + \epsilon q_r$, where $q_d$ and $q_r$ are quaternions. To operate with dual quaternions, we introduce the following operations:
\begin{align*}
    &\hat{q}_1 + \hat{q}_2 = \left[\hat{s}_1 + \hat{s}_2, \hat{\bm{v}}_1 + \hat{\bm{v}}_2\right] = \left(q_{r1} + q_{r2}\right) + \epsilon\left(q_{d1} + q_{d2}\right)\,,\\
    &\lambda \hat{q} = \left[\lambda\hat{s}, \lambda\hat{\bm{v}}\right] = \lambda q_{r} + \epsilon\lambda q_{d}\,,\\
    \begin{split}
    &\hat{q}_1 \circ \hat{q}_2 = \left[\hat{s}_1\hat{s}_2 -\hat{\bm{v}}_1^T\odot\hat{\bm{v}}_2, \hat{s}_1\hat{\bm{v}}_2 + \hat{s}_2\hat{\bm{v}}_1 + \hat{\bm{v}}_1\cross\hat{\bm{v}}_2\right]  = \,\\
    &\hspace{1cm}=q_{r1}\circ q_{r_2} + \epsilon\left(q_{r1}\circ q_{d2} + q_{d1}\circ q_{r2}\right)\,,
    \end{split}
\end{align*}
where $\hat{q}_1$ and $\hat{q}_2$ are dual quaternions, $\lambda\in\mathbb{R}$ and the operator '$\circ$' is the (dual) quaternion multiplication, which is associative and distributive but not commutative. Other relevant properties of the dual quaternion are the conjugate $\hat{q}^* = \left[\hat{s},-\hat{\bm{v}}\right]$ and the multiplicative inverse $\hat{q}^{-1} = \left(1/\hat{q}\circ\hat{q}^*\right)\circ \hat{q}^*$. It follows that dual quaternions that meet the condition $\hat{q}\circ\hat{q}^*=\hat{I}$ also satisfy $\hat{q}^{-1} = \hat{q}^*$. In such instances, the dual quaternion in question is referred to as a \emph{unit dual quaternion}.

Just as unit quaternions permit the representation of rotations, unit dual quaternions provide a means of describing three-dimensional transformations that encompass both translation and rotation. Specifically, a transformation consisting of a translation vector $\bm{p}$ and a rotation quaternion $q$ corresponds to a screw motion, which entails a translation along axis $\bm{n}$ by a distance $d$, and a rotation of angle $|\phi|$. Such a transformation is expressed as a dual quaternion in the following form:
\begin{equation}\label{eq:dq}
    \hat{q} = \left[\cos\left(\hat{\phi}/{2}\right),\,\sin\left(\hat{\phi}/{2}\right)\hat{\bm{n}}\right] = q + \epsilon/{2}\,p\circ q\,,
\end{equation}
where $\hat{\bm{n}}$ is the dual screw axis and $\hat{\phi} = |\phi| + \epsilon d$ is the dual screw angle. Similar to unit quaternions, unit dual quaternions form a Lie group $\mathcal{DQ}_u$ with respect to the dual quaternion multiplication and its logarithmic map is also a dual quaternion given by
\begin{equation}
    \ln \hat{q} = 1/2\left(\phi+\epsilon\,p\right)\,,
\end{equation}
where $\phi = \left[0,\,|\phi|\,\bm{n}\right]$. In a similar way to unit quaternions, naming $\hat{v}$ as the Lie algebra for $\mathcal{DQ}_u$, the adjoint transformation for the dual quaternions is 
$\text{Ad}_{\hat{q}} \hat{V} = \hat{q} \circ \hat{V} \circ \hat{q}^-1 = \hat{q} \circ \hat{V} \circ \hat{q}^*$, where $\hat{V}\in\hat{v}$. For additional information regarding the Lie group of unit dual quaternions, please refer to~\cite{wang2012geometric}.


\subsection{Unit dual quaternion dynamics}\label{sec:dq_dynamics}
\noindent In this subsection we aim to transform the rigid body dynamics in~\eqref{eq:rigid_body_f} to a unit dual quaternions representation. For this purpose, in a similar way to~\cite{wang2013unit}, we start by derivating~\eqref{eq:dq} in time, which results in the following kinematic equations:
\begin{subequations}\label{eq:dq_kinem}
    \begin{align}
       \dot{\hat{q}}(t) &= \frac{1}{2}\hat{\omega}(t)\circ\hat{q}(t)  \,,\label{eq:dq_dot}\\
        \hat{\omega}(t) &= \left[0,\bm{\omega}(t)\right] + \epsilon\left[0,\dot{\bm{p}}(t) + \bm{p}(t)\cross\bm{\omega}(t)\right]\label{eq:dt}
    \end{align}
where $\hat{\omega}(t)$ is the dual twist. Taking its time derivative, we get
\begin{equation}
       \dot{\hat{\omega}}(t) = \dot{\bm{\omega}}(t) + \epsilon\left(\Ddot{\bm{p}}(t) + \dot{\bm{p}}(t)\cross\bm{\omega}(t) + \bm{p}(t)\cross\dot{\bm{\omega}}(t)\right)\,,\label{eq:dt_kinem}
\end{equation}
and combining it with the rigid body dynamics in~\eqref{eq:rigid_body_dynamics}, leads to
\begin{flalign}
    &\dot{\hat{\omega}}(t) = \left(\bm{a}+\text{J}^{-1}\bm{\tau}\right) +\epsilon\left(\bm{f}/m+\dot{\bm{p}}\cross\bm{\omega}+\bm{p}\cross\left(\bm{a}+\text{J}^{-1}\bm{\tau}\right)\right) \notag\\
    &= \underbrace{\bm{a} + \epsilon\left(\bm{p}\cross\bm{a}+\dot{\bm{p}}\cross\bm{\omega}\right)}_{\hat{F}} + \underbrace{\text{J}^{-1}\bm{\tau}+\epsilon\left(\bm{f}/m +\bm{p}\cross\text{J}^{-1}\bm{\tau}\right)}_{\hat{U}}\notag\\
    &= \hat{F}(t) + \hat{U}(t) \label{eq:dt_FU}
\end{flalign}
with $\bm{a} =  \text{J}^{-1} \bm{\omega} \times \text{J}\,\bm{\omega}\,$. For readability, in the first two lines of~\eqref{eq:dt_FU} dependencies on time $(\cdot)(t)$ have been omitted. Notice that $\hat{F}(t)$ is fully defined by the rigid body's state $\bm{x}(t)$, while the force and torque control inputs $\bm{u}(t)$ only appear in $\hat{U}(t)$.
\end{subequations}
\begin{model}[\textbf{Unit dual quaternion-based dynamics}]
Letting eq.~\eqref{eq:dq} describe a screw motion for translation $\bm{p}(t)$ and rotation $q(t)$, and defining the linear and angular velocity as in eq.~\eqref{eq:rigid_body_f}, then the unit dual quaternion-based rigid body dynamics are 
\begin{subequations}\label{eq:udq}
    \begin{align}
        \dot{\hat{q}}(t) &= \frac{1}{2}\hat{\omega}(t)\circ\hat{q}(t)  \,,\\
        \hat{\omega}(t) &= \bm{\omega}(t) + \epsilon\left(\dot{\bm{p}}(t) + \bm{p}(t)\cross\bm{\omega}(t)\right)\,,\\
        \dot{\hat{\omega}}(t)&= \hat{F}(t) + \hat{U}(t)
    \end{align}
\end{subequations}
where $\hat{F}(t)$ and $\hat{U}(t)$ are given in~\eqref{eq:dt_FU}.
\end{model}

\subsection{Unit dual quaternion error dynamics}
\noindent Drawing upon the dynamics obtained in the previous subsection, we proceed to derive the unit dual quaternion-based error dynamics for the pose-following problem. By applying the dual quaternion and twist definitions in eqs.~\eqref{eq:dq} and~\eqref{eq:dt}, it is possible to transform the geometric reference $\Gamma$ in~\eqref{eq:geom_ref} into a desired dual quaternion and a desired dual twist: 
\begin{subequations}\label{eq:pose_des}
    \begin{align}
       \hat{q}_d(\theta) &= q_d(\theta) + \epsilon/2\,p_d(\theta)\circ q_d(\theta) \,,\label{eq:dq_th}\\
        \hat{\omega}_d(\theta) &= \left[0,\bm{\omega}_d(\theta)\right] + \epsilon\left[0,\mathring{\bm{p}}_d(\theta) + \bm{p}_d(\theta)\cross\bm{\omega}_d(\theta)\right]\,,\label{eq:dt_ph}
    \end{align}
\end{subequations}
where $\theta\in[\theta_0,\theta_f]$. Combining~\eqref{eq:pose_des} with the kinematics in~\eqref{eq:dq_kinem}, the equations of motion for the desired pose are obtained:
\begin{subequations}\label{eq:pose_des_ode}
    \begin{align}
       &\mathring{\hat{q}}_d(\theta) = \frac{1}{2}\hat{\omega}_d(\theta)\circ\hat{q}_d(\theta)  \,,\label{eq:dqd_kinem1}\\
       \begin{split}
       &\mathring{\hat{\omega}}_d(\theta) = \left[0,\mathring{\bm{\omega}}_d(\theta)\right] +\\
       &\hspace{0.4cm}\epsilon\left[0,\ringring{\bm{p}}_d(\theta) + \mathring{\bm{p}}_d(\theta)\cross\bm{\omega}_d(\theta) + \bm{p}_d(\theta)\cross\mathring{\bm{\omega}}_d(\theta)
       \right]\,,\label{eq:dqd_kinem2}
       \end{split}
    \end{align}
\end{subequations}
From~\eqref{eq:pose_des_ode} it is apparent that in contrast to the pose-tracking case~\cite{wang2013unit}, the desired pose in~\eqref{eq:pose_des} does not evolve according to \emph{time} $t$, but with respect to the \emph{pose-parameter} $\theta$. The error between the geometric reference and the rigid body's pose can be expressed in the form of a unit dual quaternion:
\begin{equation}\label{eq:dq_error}
\hat{q}_e(t) = \hat{q}(t)\circ \hat{q}_d^*(\theta(t))\,,
\end{equation}
Derivating the dual quaternion error~\eqref{eq:dq_error} in time\footnote{Time derivations over pose-parameter $\theta$ dependent variables, such as the $\hat{q}_d(\theta(t))$ requires using the chain rule, i.e., $\dv{(\cdot)}{t} = \dv{(\cdot)}{\theta}\dv{\theta}{t} = \mathring{(\cdot)}\dot{\theta}(t)$} leads to
\begin{equation*}
    \dot{\hat{q}}_e(t) = \dot{\hat{q}}(t)\circ\hat{q}_d^*(\theta(t)) + \dot{\theta}(t)\hat{q}(t)\circ\mathring{\hat{q}}_d^*(t)\,.
\end{equation*}
Combining it with~\eqref{eq:dq_dot}, ~\eqref{eq:dqd_kinem1},~\eqref{eq:dq_error} and the property $(\hat{q}_1\circ\hat{q}_2)^* = \hat{q}_2^*\circ\hat{q}_1^* $ results in
\begin{equation*}
\dot{\hat{q}}_e(t) = \frac{1}{2}\left(\hat{\omega}(t)\circ\hat{q}_e(t) + \dot{\theta}(t)\hat{q}_e(t)\circ \hat{\omega}_d^*(\theta(t))\right)\,.
\end{equation*}
Noticing that $\hat{q}_e \circ \hat{\omega}_d^* = \left(\hat{q}_e\circ\hat{\omega}_d^*\circ\hat{q}_e^*\right)\circ\hat{q}_e $, the equation above can be rearranged to
\begin{equation*}
    \dot{\hat{q}}_e(t) = \frac{1}{2}\left(\hat{\omega}(t) + \dot{\theta}(t)\hat{q}_e(t)\circ\hat{\omega}_d^*(\theta(t))\circ\hat{q}_e^*(t)\right) \circ\, \hat{q}_e(t)\,,
\end{equation*}
which takes the same form as~\eqref{eq:dq_dot}: 
\begin{subequations}
    \begin{align}
    \dot{\hat{q}}_e(t) &= \frac{1}{2}\hat{\omega}_e(t)\circ\, \hat{q}_e(t)\,,\label{eq:dqe_dot}\\
    \hat{\omega}_e(t) &= \hat{\omega}(t) + \dot{\theta}(t)\text{Ad}_{\hat{q}_e(t)}\hat{\omega}_d^*(\theta(t))\,.\label{eq:dte}
    \end{align}
\end{subequations}
When compared to the pose-tracking case, the first time derivative of the pose-parameter $\dot{\theta}(t)$ appears to be multiplying the second term of the dual twist error. Similar derivations result in $p_e(t) = p(t) + \text{Ad}_{q_e(t)}p_d^*(\theta(t))$ and $w_e(t) = w(t) + \dot{\theta}(t)\text{Ad}_{q_e(t)}\omega_d^*(\theta(t))$. These expressions allow to ensure that the right-hand side of~\eqref{eq:dte} is equivalent to
\begin{equation}\label{eq:we}
    \hat{\omega}_e(t) = \left[0,\bm{\omega}_e(t)\right] + \epsilon\left[0, \dot{\bm{p}}_e(t) + \bm{p}_e(t)\cross\bm{\omega}_e(t)\right]\,.
\end{equation}
\ifcomment
For brevity, we omit these derivations. In case of interest, a detailed description of the respective procedure can be found in the Appendix~\ref{apend:dt_e_app}. 
\else
For brevity, the present manuscript excludes these derivations; however, they are provided in the Appendix of the supplementary material\footnote{See Appendix in \url{https://arxiv.org/pdf/2308.09507.pdf}.}.
\fi
Other than that, to fully define the error dynamics, we still need to compute the time derivative of the dual twist error. Towards this end, we take the time derivative of~\eqref{eq:dte} and we obtain
\begin{align*}
    \begin{split}
    \dot{\hat{\omega}}_e(t) =\,\dot{\hat{\omega}}(t) &+ \ddot{\theta}(t)\,\text{Ad}_{\hat{q}_e(t)}\hat{\omega}_d^*(\theta(t)) +\\
    \dot{\theta}(t)&\left[ \dot{\hat{q}}_e(t)\circ\hat{\omega}_d^*(\theta(t))\circ\hat{q}_e(t)+ \right.\\
     &\hspace{0.25cm}\hat{q}_e(t)\circ\dot{\theta}(t)\mathring{\hat{\omega}}_d^*(\theta(t))\circ\hat{q}_e(t)+\\
     &\hspace{0.2cm}\left.\hat{q}_e(t)\circ\hat{\omega}_d^*(\theta(t))\circ\dot{\hat{q}}_e(t) \right]\,.
    \end{split}
\end{align*}
\begin{model}[\textbf{Unit dual quaternion-based error dynamics}]
For a given dual quaternion state $\hat{q}(t)$ and a desired configuration $\hat{q}_d(\theta(t))$ -- associated to pose-parameter $\theta(t)$ --, the dynamics of the dual quaternion error in~\eqref{eq:dq_error} are
\begin{subequations}\label{eq:udq_error}
    \begin{align}
        \dot{\hat{q}}_e(t) &= \frac{1}{2}\hat{\omega}_e(t)\circ\hat{q}_e(t)  \,,\\
        \hat{\omega}_e(t) &= \left[0,\bm{\omega}_e(t)\right] + \epsilon\left[0, \dot{\bm{p}}_e(t) + \bm{p}_e(t)\cross\bm{\omega}_e(t)\right]\,,\\
        \dot{\hat{\omega}}_e(t)&= \hat{F}(t) + \hat{U}(t) + \ddot{\theta}(t)\,\text{Ad}_{\hat{q}_e(t)}\hat{\omega}_d^*(\theta(t)) +\notag\\
        \begin{split}
        &\hspace{0.5cm}\dot{\theta}(t)\left[ \dot{\hat{q}}_e(t)\circ\hat{\omega}_d^*(\theta(t))\circ\hat{q}_e(t)+ \right.\\
        &\hspace{1.3cm}\hat{q}_e(t)\circ\dot{\theta}(t)\mathring{\hat{\omega}}_d^*(\theta(t))\circ\hat{q}_e(t)+\\
         &\hspace{1.2cm}\left.\hat{q}_e(t)\circ\hat{\omega}_d^*(\theta(t))\circ\dot{\hat{q}}_e(t) \right]\,\label{eq:udq_e3},
        \end{split}
    \end{align}
\end{subequations}
where $\hat{q}_e(t) = \hat{q}\circ\hat{q}_d^*(\theta(t))$, $p_e(t) = p(t) + \text{Ad}_{q_e(t)}p_d^*(\theta(t))$, $w_e(t) = w(t) + \dot{\theta}(t)\text{Ad}_{q_e(t)}\omega_d^*(\theta(t))$, and $\hat{F}(t)$ and $\hat{U}(t)$ are given in~\eqref{eq:dt_FU}.
\end{model}
\noindent The first two equations, i.e., the time derivative of the dual quaternion error and the dual twist error, show the same structure as in pose-tracking. However, differences arise in the time derivative of the dual twist error, as it involves additional terms multiplied by the first and second time derivatives of the pose-parameter $\theta(t)$.
\subsection{Control law}
\noindent Considering the error dynamics in~\eqref{eq:udq_error}, the \emph{pose-convergence} definition in P1.1 can be reformulated as  $\lim_{t\to\infty} \hat{q}_e(t) = \pm\hat{I}\,$ and $\lim_{t\to\infty} \hat{\omega}_e(t) = \hat{0}\,$. To accomplish this, in this subsection we derive a control law for the term $\hat{U}$ in equation~\eqref{eq:udq_e3}. Its design is significantly influenced by the forthcoming stability analysis. Once the control law is determined, we will be able calculate the command forces $\bm{f}$ and torques $\bm{\tau}$ from eq.~\eqref{eq:dt_FU}. In a similar way to~\cite{bullo1995proportional} and~\cite{wang2013unit}, we decouple it into a feedforward (FF) and a feedback term (FB). The former eliminates nonlinearities in~\eqref{eq:udq_e3} and the latter ensures stability.
\begin{equation}\label{eq:U}
 \hat{U} = \hat{U}_{\text{FF}} + \hat{U}_{\text{FB}}   
\end{equation}

\begin{figure*}[t]
\centering
\includegraphics[width=\textwidth]{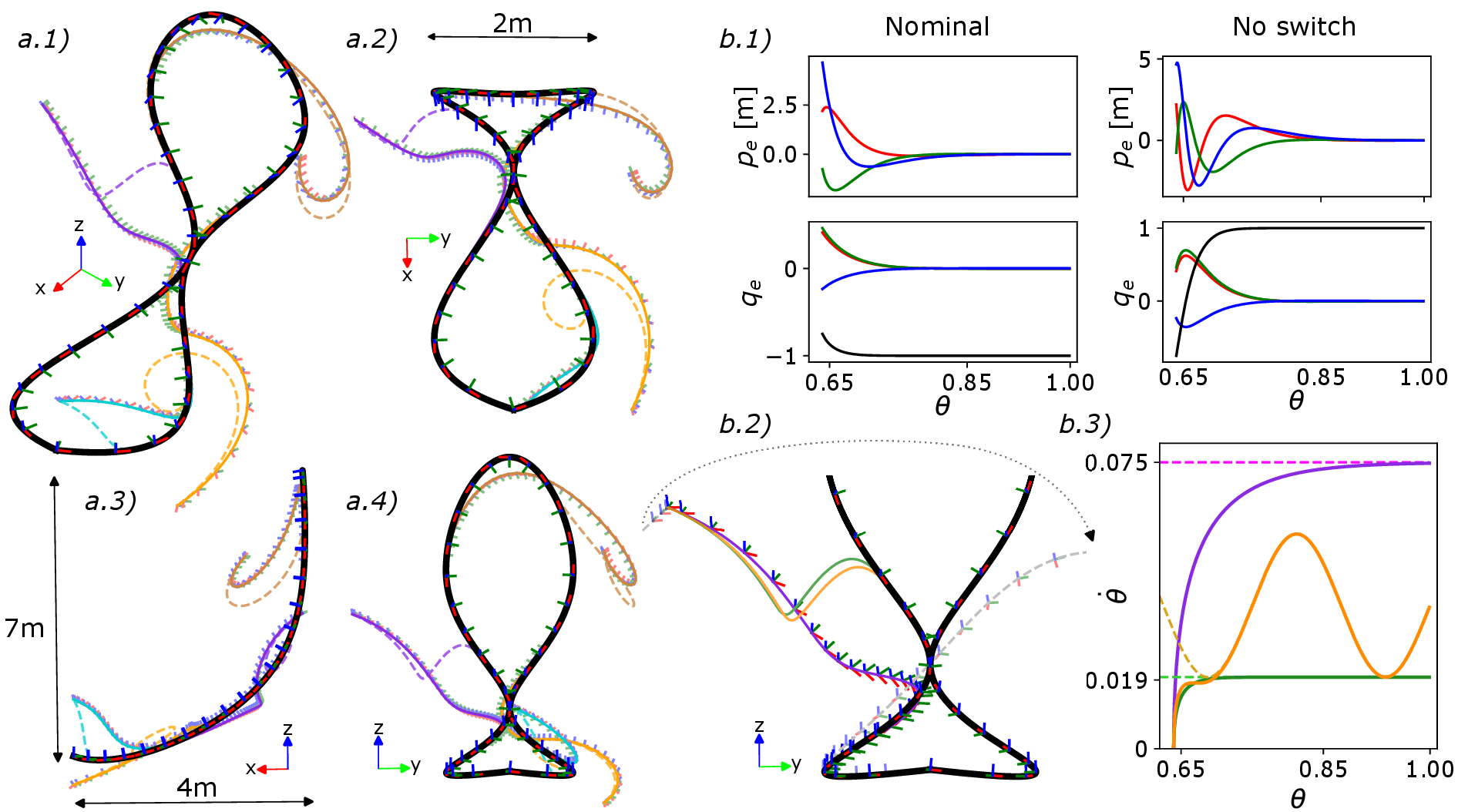}
\caption{Rigid body motions obtained from applying control law~\eqref{eq:U} -- with $\hat{U}_\theta$ defined as in Theorem 2 -- to the dynamics in~\eqref{eq:rigid_body_f}. The left column (a.1 - a.4) shows the almost global asymptotic stability of the presented pose-following control law by starting from different initial poses (purple, orange, yellow, cyan) from different perspectives. Each starting point is evaluated according to two different constant velocity profiles. The motions associated to the fast profiles are depicted by a continuous line, as well as their orientations, while the motions related to the slow profiles are given by dashed lines. The plots at the top right (b.1), together with the gray dashed line in (b.2), showcase the consequences of deactivating the $\lambda$ switch. The figure (b.3) shows that the pose-parameter's velocity (continuous line) converges to the desired velocity profile (dashed line) by evaluating three different cases (fast in purple, slow in green and sinusoidal in orange).}\label{fig:gas_results}
\vspace{-7mm}
\end{figure*}

From ~\eqref{eq:udq_e3} it is apparent that the first and the last term can readily be cancelled out by the feedforward compensation. However, this does not hold true for the second adjoint term, which is multiplied by the virtual input $\ddot{\theta}(t)$. To account for this, we choose $\ddot{\theta}(t) = U_\theta (x_\Gamma(t))$, where $U_\theta(\cdot)$ is the -- yet to be defined -- \emph{pose-parameter control law} dependent on the augmented state vector $x_\Gamma(t)$ in~\eqref{eq:rigid_body_f_aug}. This choice allows us to also include the adjoint term into the feedforward\footnote{Model 1 in~\eqref{eq:udq} and Model 2 in~\eqref{eq:udq_error} enable the conversion of $x_\Gamma(t)$ into a dual quaternion error and a dual twist error, as well as the conversion of these errors back into $x_\Gamma(t)$.}:
\begin{align}
    \hat{U}_\text{FF}&(t, U_\theta) = -\hat{F}(t) -U_\theta(x_\Gamma(t))\,\text{Ad}_{\hat{q}_e(t)}\hat{\omega}_d^*(\theta(t))-\notag\\
    &\dot{\theta}(t)\left[ \dot{\hat{q}}_e(t)\circ\hat{\omega}_d^*(\theta(t))\circ\hat{q}_e(t)+ \right.\label{eq:U_ff}\\
     &\hspace{-0.05cm}\hat{q}_e(t)\circ\dot{\theta}(t)\mathring{\hat{\omega}}_d^*(\theta(t))\circ\hat{q}_e(t)+
    \left.\hat{q}_e(t)\circ\hat{\omega}_d^*(\theta(t))\circ\dot{\hat{q}}_e(t) \right]\notag\,.
\end{align}
Regarding the feedback term, following the pose-tracking formulation in~\cite{wang2013unit}, we leverage the logarithmic mapping associated to the Lie group of unit dual quaternions $\mathcal{DQ}_u$ to design a proportional derivative feedback as
\begin{equation}\label{eq:u_fb}
    \hat{U}_\text{FB}(t) = -2\hat{\bm{k}}_p\odot\ln\lambda\hat{q}_e(t) - \hat{\bm{k}}_v\odot\hat{\omega}_e(t)\,,
\end{equation}
where $\hat{\bm{k}}_p$ and $\hat{\bm{k}}_v$ are vector dual quaternion control gains and $\lambda\in\{-1,1\}$ is a switching parameter to account for both equilibrium points $\pm\hat{I}$. This is defined as $\lambda = 1$, if $\hat{q}_{e_1}(t)>=0$ and $-1$ otherwise, where $\hat{q}_{e_1}$ refers to the first component of $\hat{q}_e(t)$.

\subsection{Stability analysis}
\noindent In the present subsection, we establish the necessary conditions for control laws $\hat{U}$ and $U_\theta$ to almost global asymptotic stability\footnote{We assume perfect and instantaneous state measurements, and thus, the presented global attractivity might be jeopardized by noises and delays that arise in practical applications. This can formally be addressed by combining the proposed method with robust control.}.
\begin{theorem}[\textbf{Stability of pose-following}]
Consider the geometric reference~\eqref{eq:geom_ref}, the augmented system~\eqref{eq:rigid_body_f_aug}, the control law $\hat{U}$ in~\eqref{eq:U} with the feedforward and feedback terms in~\eqref{eq:U_ff} and~\eqref{eq:u_fb}, and suppose that the following conditions are satisfied:
\begin{itemize}
    \item [i] The dual quaternion control gains are chosen as $\hat{\bm{k}}_p > \hat{0}$ with $k_{pd1} = k_{pd2}= k_{pd3}$, i.e., equivalent terms in the dual part of $\hat{\bm{k}}_p$, and $\hat{\bm{k}}_v > \hat{0}$.
    \item [ii] The pose-parameter control law ensures that the velocity of the pose-parameter is positive, i.e., $U_\theta(x_\Gamma(t))\implies\dot{\theta}(t) > 0,\;\forall\,\theta\in[\theta_0,\theta_f]$.
\end{itemize}
Then, the closed-loop control scheme defined by system~\eqref{eq:rigid_body_f} and control law~\eqref{eq:U} solves the pose-following Problem 1.
\end{theorem}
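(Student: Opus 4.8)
The plan is to build a Lyapunov function on the error state $(\hat{q}_e,\hat{\omega}_e)$ and invoke LaSalle's invariance principle, in the spirit of the pose-tracking proof in~\cite{wang2013unit} but now carrying the extra $\dot{\theta}$- and $\ddot{\theta}$-dependent terms of~\eqref{eq:udq_e3}. The first move is to close the loop. Substituting $\hat{U}=\hat{U}_{\text{FF}}+\hat{U}_{\text{FB}}$ from~\eqref{eq:U},~\eqref{eq:U_ff} and~\eqref{eq:u_fb} into~\eqref{eq:udq_e3}, and routing the virtual input as $\ddot{\theta}(t)=U_\theta(\bm{x}_\Gamma(t))$, the feedforward by construction cancels $\hat{F}(t)$, the adjoint term weighted by $\ddot{\theta}$, and the three $\dot{\theta}$-weighted bracket terms. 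The dual twist error dynamics therefore collapse to the canonical PD form $\dot{\hat{\omega}}_e(t)=\hat{U}_{\text{FB}}(t)=-2\hat{\bm{k}}_p\odot\ln\lambda\hat{q}_e(t)-\hat{\bm{k}}_v\odot\hat{\omega}_e(t)$, coupled with the kinematics $\dot{\hat{q}}_e(t)=\tfrac{1}{2}\hat{\omega}_e(t)\circ\hat{q}_e(t)$. Reducing the pose-following error system to exactly the pose-tracking form is the payoff of pushing the pose-parameter acceleration into the feedforward.

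Next I would propose a Lyapunov candidate $V=V_p+V_k$, where the potential $V_p$ is the $\hat{\bm{k}}_p$-weighted squared norm of the logarithmic pose error $\ln\lambda\hat{q}_e(t)$ and the kinetic term $V_k$ is the squared norm of the dual twist error $\hat{\omega}_e(t)$. Condition~(i) with $\hat{\bm{k}}_p>\hat{0}$ renders $V_p$ positive definite in $\ln\lambda\hat{q}_e$, while $\ln\lambda\hat{q}_e=\hat{0}$ holds exactly at the admissible equilibria $\hat{q}_e=\pm\hat{I}$. The switch $\lambda\in\{-1,1\}$ keeps the scalar part of $\lambda\hat{q}_e$ nonnegative, so that the logarithm is evaluated on the shorter geodesic and $V$ vanishes precisely when both the pose error and the dual twist error vanish.

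The technical heart is differentiating $V$ along the closed-loop flow. The kinetic part yields a term involving $\dot{\hat{\omega}}_e$, into which the collapsed dynamics are substituted, producing a $\hat{\bm{k}}_p$-weighted cross term and the desired $-\hat{\bm{k}}_v$ dissipation. The subtle part is $\dot{V}_p$, which requires the time derivative of the dual-quaternion logarithm: using $\dot{\hat{q}}_e=\tfrac{1}{2}\hat{\omega}_e\circ\hat{q}_e$ one relates $\tfrac{d}{dt}\ln\lambda\hat{q}_e$ to $\hat{\omega}_e$ up to the right Jacobian of the logarithm. Here the restriction $k_{pd1}=k_{pd2}=k_{pd3}$ in condition~(i) is essential: equal dual-part gains let $\hat{\bm{k}}_p$ commute through the cross-product coupling between the real (rotational) and dual (translational) channels, so that $\dot{V}_p$ exactly cancels the $\hat{\bm{k}}_p$ cross term coming from $V_k$. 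What remains is the $\hat{\bm{k}}_v$-weighted squared norm of $\hat{\omega}_e$ with a minus sign, i.e. $\dot{V}\le 0$, negative semidefinite by $\hat{\bm{k}}_v>\hat{0}$. I expect establishing this cancellation — and pinning down why the equal-dual-part diagonal gain is precisely the algebraic condition that makes the real/dual cross terms vanish — to be the main obstacle.

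Finally, with $\dot{V}\le 0$ and compact sublevel sets of $V$, LaSalle's invariance principle applies: on the largest invariant set where $\dot{V}=0$ the dual twist error satisfies $\hat{\omega}_e\equiv\hat{0}$, hence $\dot{\hat{\omega}}_e\equiv\hat{0}$, which through the collapsed dynamics forces $\ln\lambda\hat{q}_e\equiv\hat{0}$, i.e. $\hat{q}_e=\pm\hat{I}$. Both limits represent the same target pose, and the switch $\lambda$ drives the trajectory to the closer one; the unavoidable measure-zero stable manifold of the antipodal configuration accounts for the \emph{almost} global qualifier, consistent with the topological obstruction on $\mathrm{SO}(3)$ noted earlier. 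This establishes the pose convergence P1.1. For P1.2, condition~(ii) guarantees $\dot{\theta}(t)>0$ for all $\theta\in[\theta_0,\theta_f]$, so $\theta(t)$ increases monotonically and, together with pose convergence, reaches the end of the reference, yielding $\lim_{t\to\infty}(\theta_f-\theta(t))=0$.
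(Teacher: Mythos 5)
Your proposal is correct and takes essentially the same route as the paper: close the loop so that the feedforward~\eqref{eq:U_ff} with $\ddot{\theta}=U_\theta(\bm{x}_\Gamma)$ cancels every nonlinearity in~\eqref{eq:udq_e3}, collapsing the error dynamics to $\dot{\hat{\omega}}_e=\hat{U}_{\text{FB}}$, after which pose convergence follows from the pose-tracking stability analysis and P1.2 follows from $\dot{\theta}>0$ via a Lyapunov argument on $\theta_f-\theta$. The only difference is one of packaging: where you re-derive the logarithm-based Lyapunov/LaSalle argument (including the role of $k_{pd1}=k_{pd2}=k_{pd3}$ and the $\lambda$-switch), the paper simply notes that~\eqref{eq:we} still holds and delegates that entire analysis to~\cite{wang2013unit}.
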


\begin{proof} 
Starting with \emph{pose convergence} in P1.1, since the feedforward term~\eqref{eq:U_ff} was designed to eliminate  all the nonlinearities in~\eqref{eq:udq_e3}, substituting~\eqref{eq:U} in~\eqref{eq:udq_e3} results in $\dot{\hat{\omega}}_e = \hat{U}_{\text{FB}}$. In addition, considering that~\eqref{eq:we} also remains true for pose-following, the stability analysis in~\cite{wang2013unit} holds. This implies that Model 2 converges to the closest equilibrium point $\{\pm\hat{I},\hat{0}\}$ asymptotically, which directly translates to the fulfillment of pose convergence.

Regarding \emph{convergence on pose-parameter} in P1.2, combining the Lyapunov function $V = ||\theta(t) - \theta_f||^2$ with $\dot{\theta}(t)>0$ -- from (ii) -- shows that $\theta_f$ is an asymptotically stable equilibrium point.
\end{proof}
\begin{theorem}[\textbf{Stability of pose-following with velocity assignment}]
Consider the geometric reference~\eqref{eq:geom_ref}, the augmented system~\eqref{eq:rigid_body_f_aug}, the control law $\hat{U}$ in~\eqref{eq:U} with the feedforward and feedback terms in~\eqref{eq:U_ff} and~\eqref{eq:u_fb}, and suppose that the following conditions are satisfied:
\begin{itemize}
    \item [i] The dual quaternion control gains are chosen as $\hat{\bm{k}}_p > \hat{0}$ with $k_{pd1} =k_{pd2}=k_{pd3}$ and $\hat{\bm{k}}_v > \hat{0}$.
    \item [ii] The pose-parameter control law is given by
    $U_\theta(x_{\Gamma}(t)) = -k_\theta\left(\dot{\theta}(t) - \theta_{vd}(\theta(t))\right) + \dot{\theta}(t)\mathring{\theta}_{vd}(\theta(t))$, where $k_\theta\in\mathbb{R}_{>0}$.
\end{itemize}
Then, the closed-loop control scheme defined by system~\eqref{eq:rigid_body_f} and control law~\eqref{eq:U} solves the pose-following with velocity assignment Problem 2.
\end{theorem}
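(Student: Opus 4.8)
The plan is to exploit a near-decoupled cascade structure. Under the chosen $U_\theta$, the pose-parameter subsystem $(\theta(t),\dot\theta(t))$ evolves autonomously, since $U_\theta(x_\Gamma(t))$ in condition (ii) depends only on $\theta(t)$ and $\dot\theta(t)$ and not on the pose error; conversely, after feedforward cancellation the pose error dynamics become self-contained in $(\hat q_e,\hat\omega_e)$. I would therefore prove P2.1 and P2.2 separately, reusing Theorem~1 for the former and a short scalar argument for the latter.

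For \emph{pose convergence} (P2.1), the argument is identical to Theorem~1 and does not depend on the specific form of $U_\theta$. The key observation is that $\hat U_{\text{FF}}$ in~\eqref{eq:U_ff} is built so that, whatever value $U_\theta(x_\Gamma(t))$ takes, the adjoint term $-U_\theta\,\text{Ad}_{\hat q_e}\hat\omega_d^*$ exactly cancels $\ddot\theta\,\text{Ad}_{\hat q_e}\hat\omega_d^* = U_\theta\,\text{Ad}_{\hat q_e}\hat\omega_d^*$ in~\eqref{eq:udq_e3}; together with the cancellation of $\hat F$ and of the two $\dot\theta[\cdots]$ terms, substituting~\eqref{eq:U} into~\eqref{eq:udq_e3} leaves $\dot{\hat\omega}_e=\hat U_{\text{FB}}$. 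The resulting closed loop $\dot{\hat q}_e=\tfrac12\hat\omega_e\circ\hat q_e$, $\dot{\hat\omega}_e=-2\hat{\bm k}_p\odot\ln\lambda\hat q_e-\hat{\bm k}_v\odot\hat\omega_e$ is exactly the proportional-derivative system analyzed in~\cite{wang2013unit}, and condition (i) matches the gain requirements of that analysis, so $\{\pm\hat I,\hat 0\}$ is almost globally asymptotically stable, which yields P2.1.

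For \emph{velocity convergence} (P2.2), I would introduce the scalar velocity error $e_v(t)=\dot\theta(t)-\theta_{vd}(\theta(t))$ and differentiate it in time. Using the chain rule $\tfrac{d}{dt}\theta_{vd}(\theta(t))=\mathring{\theta}_{vd}(\theta(t))\dot\theta(t)$ and substituting $\ddot\theta(t)=U_\theta(x_\Gamma(t))$ from condition (ii) gives
\begin{equation*}
\dot e_v(t)=\ddot\theta(t)-\dot\theta(t)\mathring{\theta}_{vd}(\theta(t))=-k_\theta\bigl(\dot\theta(t)-\theta_{vd}(\theta(t))\bigr)=-k_\theta e_v(t),
\end{equation*}
where the feedforward summand $\dot\theta\,\mathring{\theta}_{vd}$ of $U_\theta$ is precisely what cancels the chain-rule contribution. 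With the Lyapunov function $V=\tfrac12 e_v^2$ one then obtains $\dot V=-k_\theta e_v^2\le 0$, negative definite since $k_\theta>0$, so $e_v(t)\to 0$ (in fact exponentially), establishing P2.2.

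The routine part is bookkeeping: verifying that the feedforward cancellation in P2.1 goes through unchanged for the new $U_\theta$, and confirming that the velocity subsystem is genuinely autonomous so that the two convergence claims do not interfere. The one genuinely design-dependent step, which I expect to be the crux, is recognizing that the second summand $\dot\theta\,\mathring{\theta}_{vd}$ of $U_\theta$ is not arbitrary but exactly feedforward-compensates $\tfrac{d}{dt}\theta_{vd}$, thereby linearizing the velocity-error dynamics; once this is seen, everything reduces to the scalar linear ODE above together with the reuse of~\cite{wang2013unit}.
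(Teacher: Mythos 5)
Your proposal is correct and takes essentially the same route as the paper: pose convergence (P2.1) is obtained exactly as in Theorem~1 via the feedforward cancellation leaving $\dot{\hat{\omega}}_e=\hat{U}_{\text{FB}}$ and invoking the analysis of~\cite{wang2013unit}, while velocity convergence (P2.2) rests on the Lyapunov function built from the error $\dot{\theta}(t)-\theta_{vd}(\theta(t))$ together with $\ddot{\theta}(t)=U_\theta(x_\Gamma(t))$. Your explicit chain-rule computation showing $\dot{e}_v=-k_\theta e_v$ (hence exponential convergence) merely fills in the detail that the paper's one-line Lyapunov sketch leaves implicit.
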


\begin{proof}
The proof for \emph{pose convergence} in P2.1 remains the same as P1.1 in Theorem 1. When it comes to \emph{velocity convergence} in P2.2, the utilization of the Lyapunov function $V=||\dot{\theta}(t)-\theta_{vd}(\theta(t))||^2$ in conjunction with $U_\theta(x_{\Gamma}(t))$ as given in (iii), and the recognition that $\ddot{\theta}(t) = U_\theta(x_{\Gamma}(t))$, indicates that the velocity of the pose-parameter asymptotically converges to the desired velocity profile.
\end{proof}

\begin{figure*}[t]
\centering
\includegraphics[width=\textwidth]{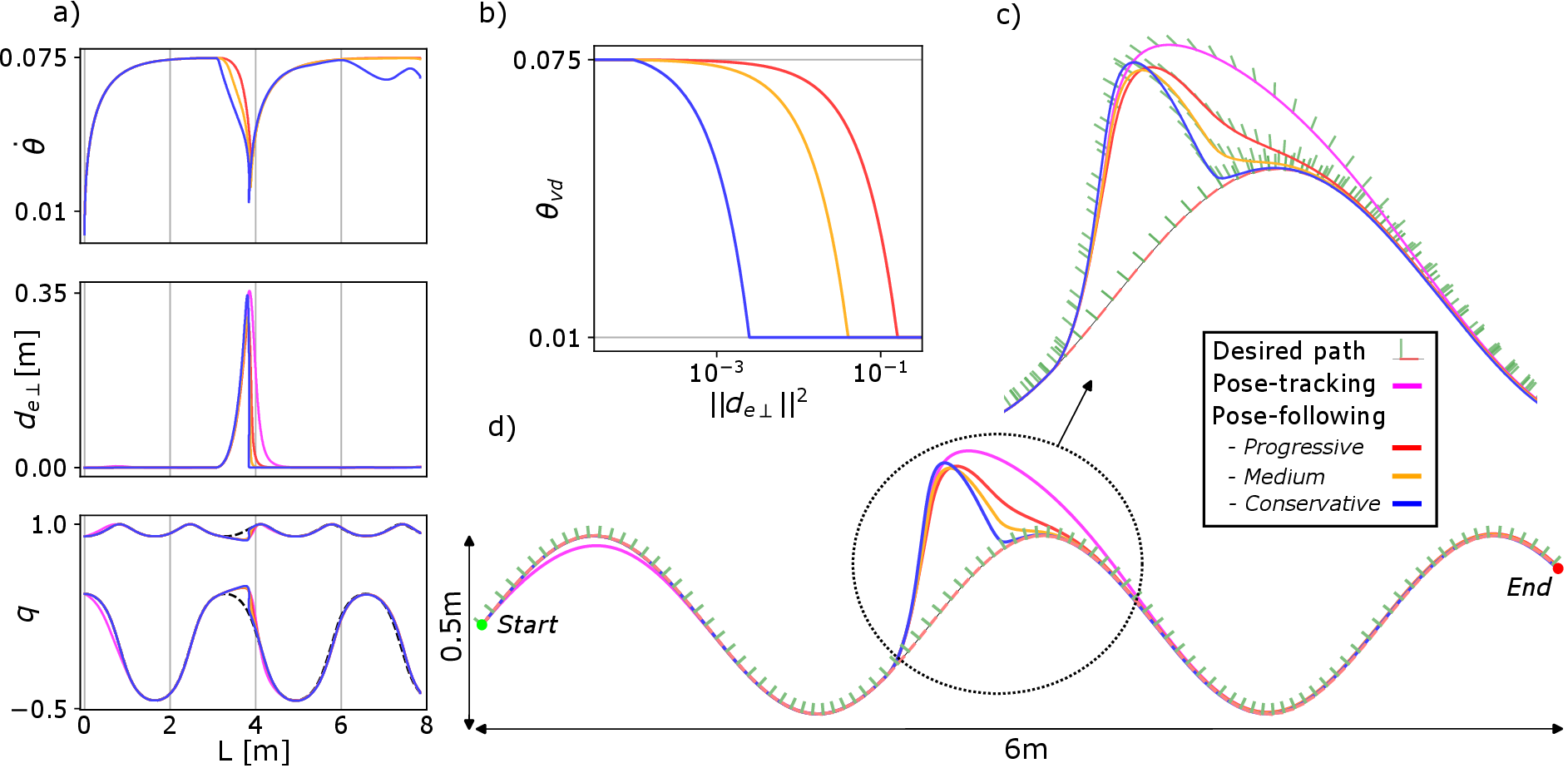}
\caption{A comparison between pose-tracking~\cite{wang2013unit} (magenta) and three variants of the presented pose-following control approach (progressive as red, medium as orange and conservative as blue) in the presence of a disturbance. The geometric reference is given by a thin black line and its moving frame. For the rest of the motions, only the normal components are shown for clarity. a) From top to bottom, velocity of pose-parameter, transverse distance to the geometric reference and first and last components of the unit quaternion, where the dashed black line refers to the geometric reference's orientation. b) The desired velocity profile as a mapping from transverse distance to the reference. c) Zoomed comparison in the location of the disturbance and d) Overview of the case-study.}\label{fig:disturbance_results}
\vspace{-7mm}
\end{figure*}
\section{NUMERICAL EXPERIMENTS}\label{sec:experiments}
\noindent In order to assess the effectiveness of our methodology, we focus on two case studies. The first one focuses on the key properties of the derived control law, including its almost global asymptotic stability and its capability to converge to a predetermined velocity profile. In the second case-study, we demonstrate the advantages of the proposed pose-following control law in comparison to its predecessor, the pose-tracking control law.

\noindent \textit{Numerical implementation:} For all evaluations, the parameters are kept constant as $m=1\text{kg}$, $J =\text{diag}(0.01,0.01,0.01)\,\text{Kg}/\text{m}^{-2}$, $\hat{\bm{k}}_p=\hat{\bm{k}}_v=\hat{\bm{3}}$ and $k_\theta$ = 1.

\subsection{Almost global asymptotic stability on pose-following with velocity assignment}
\noindent The primary focus of this study is to validate the outcome of Theorem 2: almost global asymptotic stability for pose-following with velocity assignment. When doing so, we intend to verify that, regardless of the initial state, the pose of the rigid body converges to the geometric reference. To this end, we initialize the system at four distinct poses. In addition, we also want to show that this convergence is upheld irrespective of the velocity assignment. To achieve this goal, we evaluate each starting condition according to two distinct profiles, namely a slow one with $\theta_{vd,\text{slow}}=0.019$ and a fast one with $\theta_{vd,\text{fast}}=0.075$. As an exemplary geometric reference, we select the same three-dimensional curve as in \cite{kumar2017path} and to conform to the requirements of \eqref{eq:geom_ref}, we assign a moving frame to it. For this purpose, akin to~\cite{arrizabalaga2022spatial,arri2022spatially}, we rely on  \emph{Pythagorean Hodograph curves}, allowing us to overcome the singularities and discontinuities of the well-known \emph{Frenet-Serret} frame.


The motions resulting from applying the control law~\eqref{eq:U} -- with the pose-parameter control $\hat{U}_\theta$ defined as in Theorem 2 -- to the rigid body dynamics in~\eqref{eq:rigid_body_f} are depicted in the left side of Fig.~\ref{fig:gas_results}. The motions respective to the runs with the faster velocity profiles are shown by a continuous line and their respective orientations, while the ones related to the slow profiles are represented by dashed lines.

These motions manifest two noteworthy characteristics. The first one being that all of them demonstrate asymptotic convergence towards the geometric reference. The second characteristic, which aligns with common intuition, is that motions corresponding to the slower velocity profiles attain convergence at an earlier stage.

For a more comprehensive analysis, we direct attention to the purple case-study, which refers to the motion located at the top-left corner of Fig.~\ref{fig:gas_results}(a.1). A magnified view of this case is presented on the right-hand side of Fig.~\ref{fig:gas_results}(b.2). We hereby validate that the velocity of the pose-parameter, $\dot{\theta}(t)$, achieves convergence with the desired velocity profile, $\theta_{vd}$. To accomplish this, as demonstrated in Fig.~\ref{fig:gas_results}(b.3), we examined the convergence in not only slow (green) and fast (purple) constant velocity profiles but also in a sinusoidal profile (orange).

Lastly, we showcase the importance of taking care of the existence of two equilibrium points $\pm\hat{I}$, which in our approach is handled by the switching term $\lambda$~\eqref{eq:u_fb}. Within the same case-study as in the previous paragraph, we show that if this switching term is deactivated, the control law only converges to $\hat{I}$ resulting in unnecessarily lengthy and large motion. This can be visualized in the position and quaternion errors, as well as in the resultant motions colored in light gray in~Fig.\ref{fig:gas_results}(b.1-2).

\subsection{Comparison to pose-tracking}
\noindent Having analyzed the most relevant properties of the presented control law, in this second case-study we compare the performance of the proposed pose-following approach against pose-tracking~\cite{wang2013unit}. For this purpose, we pick a planar sinusoidal curve with a moving frame attached to it as a geometric reference. The task at hand consists of traversing the geometric reference from a zero-velocity pose. However, at the middle of the navigation a longitudinal and angular disturbance is introduced. To ensure a fair comparison, both the pose-tracking and pose-following have been tuned to ensure that the navigation time is the same if no disturbance occurs.

In this experiment the desired velocity profile function is chosen to be dependent on the distance to the geometric reference: $U_\theta(x_{\Gamma}(t)) = -k_\theta\left(\dot{\theta}(t) - \theta_{vd}(d_{e\,\perp}(x_{\Gamma}(t))\right)$\footnote{$d_{e,\perp}(x_{\Gamma}(t))$ is the transverse distance to the geometric reference.} Intuitively, if the system is far away from the reference, it slows down until it is close enough to increase the speed. This mapping is regarded as a tuning parameter that the user can tailor based on system properties and task at hand. In an illustrative manner, we design three variants: progressive (red), medium (orange) and conservative (blue). These velocity profiles alongside their associated motions can be visualized in Fig.~\ref{fig:disturbance_results}. 

When compared to pose-tracking (in magenta), two differences can be spotted. First, at the very beginning of the trajectory, the tracking method shows a small deviation from the reference. This is due to the fact that the rigid body initially is standing still and needs to catch up with the moving time-reference. In contrast the presented pose-following is aware of its initial state and progressively increases its velocity along the reference. Second, as soon as the disturbance is over, the additional degree of freedom inherited from augmenting the system allows all three variants to slow down and converge back to the geometric reference. This can clearly be visualized in the evolution of $\dot{\theta}$. As expected, the convergence rate directly correlates to how conservative the desired velocity profile mapping is. On the other hand, pose-tracking lacks this additional degree of freedom and has no choice but to catch up with the time-based reference, causing a large deviation error.

\section{CONCLUSIONS}\label{sec:conclusions}
\noindent In this paper we have formulated a unit dual quaternion-based pose-following control approach for rigid body dynamics. Initially, we have derived the equations of motion for the full pose error between the rigid body and the geometric reference in the form of a dual quaternion and dual twist. Subsequently, we have extended the original control law to account for nonlinearities arising from the introduction of auxiliary states associated with pose-following and designed the additional degree of freedom either to achieve convergence to a desired velocity profile or as a feedback mechanism. When doing so, we have also established almost global asymptotic stability. Lastly, we have numerically validated our findings with two illustrative simulations.

\newcommand{\BIBdecl}{\setlength{\itemsep}{0.015 em}} 
\bibliographystyle{IEEEtran}
\bibliography{path_following_with_dual_quaternions}

\ifcomment
\newpage
\begin{appendices}
\section{Derivation of dual twist error}\label{apend:dt_e_app}
\noindent Let $\hat{q}_e$ and $\hat{\omega}_e$, defined in eqs.~\eqref{eq:dq_error} and ~\eqref{eq:dte}, be the dual quaternion and twist errors between the pose of rigid body~\eqref{eq:rigid_body_dynamics} and the parametric geometric reference~\eqref{eq:geom_ref}. Further derivating the dual twist error as in~\eqref{eq:we} requires from the upcoming three lemmas:
\begin{lemma}\label{lemma:lemma1}
Let $q_e$ and $\bm{\omega}_e$ refer to the quaternion and twist errors. Then,
\begin{subequations}
    \begin{align*}
    \dot{q}_e(t) &= \frac{1}{2}\bm{\omega}_e(t)\circ\, q_e(t)\,,\\
    \bm{\omega}_e(t) &= \omega(t) + \dot{\theta}(t)\text{Ad}_{q_e(t)}\bm{\omega}_d^*(\theta(t))\,.
    \end{align*}
\end{subequations}
\end{lemma}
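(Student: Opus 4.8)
The plan is to mirror, at the level of the plain quaternion, the derivation carried out in the main text for the unit dual quaternion error in~\eqref{eq:dqe_dot}--\eqref{eq:dte}. First I would define the quaternion error as $q_e(t) = q(t)\circ q_d^*(\theta(t))$, in direct analogy to~\eqref{eq:dq_error}, and differentiate it in time. Since $q_d$ depends on $t$ only through the pose-parameter $\theta(t)$, the chain rule turns the derivative of the second factor into $\dot{\theta}(t)\,\mathring{q}_d^*(\theta(t))$, yielding
\begin{equation*}
\dot{q}_e(t) = \dot{q}(t)\circ q_d^*(\theta(t)) + \dot{\theta}(t)\,q(t)\circ\mathring{q}_d^*(\theta(t))\,.
\end{equation*}

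Next I would substitute the real-part kinematic relations $\dot{q}(t) = \tfrac{1}{2}\bm{\omega}(t)\circ q(t)$ and $\mathring{q}_d(\theta) = \tfrac{1}{2}\bm{\omega}_d(\theta)\circ q_d(\theta)$ (the counterparts of~\eqref{eq:dq_dot} and~\eqref{eq:dqd_kinem1}), together with the conjugation identity $(q_1\circ q_2)^* = q_2^*\circ q_1^*$. After recognizing that $q(t)\circ q_d^*(\theta(t)) = q_e(t)$ appears in both terms, this collapses to
\begin{equation*}
\dot{q}_e(t) = \tfrac{1}{2}\left(\bm{\omega}(t)\circ q_e(t) + \dot{\theta}(t)\,q_e(t)\circ\bm{\omega}_d^*(\theta(t))\right)\,.
\end{equation*}

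The key step is then to factor $q_e(t)$ out on the right of the second term using the adjoint ``sandwich'' identity, valid because $q_e$ is a unit quaternion and hence $q_e^*\circ q_e = I$: namely $q_e\circ\bm{\omega}_d^* = \left(q_e\circ\bm{\omega}_d^*\circ q_e^*\right)\circ q_e = \left(\text{Ad}_{q_e}\bm{\omega}_d^*\right)\circ q_e$, exactly as done in the dual case. Collecting the common right factor $q_e(t)$ then gives $\dot{q}_e(t) = \tfrac{1}{2}\bm{\omega}_e(t)\circ q_e(t)$ with $\bm{\omega}_e(t) = \bm{\omega}(t) + \dot{\theta}(t)\,\text{Ad}_{q_e(t)}\bm{\omega}_d^*(\theta(t))$, which is precisely the claim.

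I expect no genuine obstacle here: the result is structurally identical to the unit dual quaternion derivation already completed in the body, and this lemma simply isolates its real-part analogue as a building block for the subsequent dual twist error computation. The only care required is in respecting the non-commutativity of quaternion multiplication and in applying the chain rule correctly to the $\theta$-dependent desired quaternion. The main bookkeeping point is ensuring the adjoint factorization is applied to the right-hand factor, so that $q_e(t)$ emerges as a common right multiplier throughout.
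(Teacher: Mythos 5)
Your proposal is correct and follows essentially the same route as the paper's proof: differentiate $q_e(t)=q(t)\circ q_d^*(\theta(t))$ with the chain rule on $\theta(t)$, substitute the quaternion kinematics (your derivation of $\mathring{q}_d^*=\tfrac{1}{2}q_d^*\circ\bm{\omega}_d^*$ via conjugation is just the paper's use of $\dot{q}^*=\tfrac{1}{2}q^*\circ\bm{\omega}^*$ stated differently), and factor $q_e(t)$ out on the right using the adjoint sandwich identity. No gaps; the only cosmetic difference is that the paper quotes the conjugated kinematic relation directly rather than deriving it from $(q_1\circ q_2)^*=q_2^*\circ q_1^*$.
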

\begin{proof}
    Defining the quaternion error as $q_e(t) = q(t) \circ q_d^*(\theta(t))$, its time derivative is given by
    \begin{equation*}
        \dot{q}_e(t) = \dot{q}(t)\circ q_d^*(\theta(t)) + q(t)\circ \dot{q}_d^*(\theta(t))
    \end{equation*}
    Leveraging that $\dot{q}(t)=\frac{1}{2}\bm{\omega}(t)\circ q(t)$ and $\dot{q}^*(t)=\frac{1}{2}q^*(t)\circ\bm{\omega}^*(t)$, results in
    \begin{equation*}
    \begin{split}
    \dot{q}_e(t) =& \frac{1}{2}(\bm{\omega}(t)\circ q(t)\circ q_d^*(\theta(t)) +\\
    &\dot{\theta}(t)q(t)\circ q^*_d(\theta(t))\circ\bm{\omega}^*_d(\theta(t)))\,,
    \end{split}
    \end{equation*}
    and further simplifies into
    \begin{equation*}
    \dot{q}_e(t) = \frac{1}{2}\left(\bm{\omega}(t)\circ q_e(t) + \dot{\theta}(t)q_e(t)\circ\bm{\omega}^*_d(\theta(t)))\right)\,,
    \end{equation*}
    which is equivalent to
    \begin{equation*}
        \dot{q}_e(t) = \frac{1}{2}\underbrace{\left(\bm{\omega}(t) + \dot{\theta}(t)q_e(t)\circ\bm{\omega}_d^*(\theta(t))\circ q_e^*(t)\right)}_{\bm{\omega}_e(t)}\circ\, q_e(t)\,.
    \end{equation*}
\end{proof}
\begin{lemma}\label{lemma:lemma2}
Let $\bm{p}_e(t)$ refer to the position error as
\begin{equation*}
    \bm{p}_e(t) = \bm{p}(t) + \text{Ad}_{q_e(t)}\bm{p}_d^*(\theta(t))\,,
\end{equation*}
then
\begin{equation*}
    \begin{split}
    \dot{\bm{p}}_e(t) =&\, \dot{\bm{p}}(t) + \bm{\omega}_e(t)\cross \text{Ad}_{q_e(t)}\bm{p}_d^*(\theta(t)) +\\
    &\dot{\theta}(t)\text{Ad}_{q_e(t)}\mathring{\bm{p}}_d^*(\theta(t))\,.
    \end{split}
\end{equation*}
\end{lemma}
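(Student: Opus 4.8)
The plan is to differentiate the two summands of $\bm{p}_e(t) = \bm{p}(t) + \text{Ad}_{q_e(t)}\bm{p}_d^*(\theta(t))$ separately. The first summand contributes simply $\dot{\bm{p}}(t)$, so all the content lies in differentiating the adjoint term $\text{Ad}_{q_e(t)}\bm{p}_d^*(\theta(t)) = q_e\circ\bm{p}_d^*\circ q_e^*$, where $\bm{p}_d^*$ is read as a vector quaternion. First I would apply the product rule to this triple product, producing three terms: $\dot{q}_e\circ\bm{p}_d^*\circ q_e^*$, $q_e\circ\tfrac{d}{dt}\bm{p}_d^*\circ q_e^*$, and $q_e\circ\bm{p}_d^*\circ\dot{q}_e^*$.

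Next I would substitute the kinematic relation from Lemma~\ref{lemma:lemma1}, namely $\dot{q}_e = \tfrac{1}{2}\bm{\omega}_e\circ q_e$, together with its conjugate $\dot{q}_e^* = \tfrac{1}{2}q_e^*\circ\bm{\omega}_e^*$, and handle the $\theta$-dependence of $\bm{p}_d^*$ through the chain rule $\tfrac{d}{dt}\bm{p}_d^*(\theta(t)) = \dot{\theta}(t)\mathring{\bm{p}}_d^*(\theta(t))$. The three terms then collapse to $\tfrac{1}{2}\bm{\omega}_e\circ\text{Ad}_{q_e}\bm{p}_d^*$, $\dot{\theta}\,\text{Ad}_{q_e}\mathring{\bm{p}}_d^*$, and $\tfrac{1}{2}\text{Ad}_{q_e}\bm{p}_d^*\circ\bm{\omega}_e^*$, respectively, where I have used associativity of the quaternion product to regroup $q_e\circ(\cdot)\circ q_e^*$ into the adjoint in the first and third terms.

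The key step is merging the first and third terms into a single cross product. Since $\bm{\omega}_e$ carries a vanishing scalar part, its conjugate obeys $\bm{\omega}_e^* = -\bm{\omega}_e$, so these two terms combine into $\tfrac{1}{2}\left(\bm{\omega}_e\circ\text{Ad}_{q_e}\bm{p}_d^* - \text{Ad}_{q_e}\bm{p}_d^*\circ\bm{\omega}_e\right)$. I would then invoke the elementary identity $\bm{a}\circ\bm{b} - \bm{b}\circ\bm{a} = 2\,\bm{a}\cross\bm{b}$, valid for any two vector quaternions (it follows directly from the multiplication formula, whose scalar parts cancel while the cross-product contributions add), to reduce this to $\bm{\omega}_e\cross\text{Ad}_{q_e}\bm{p}_d^*$. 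Adding back $\dot{\bm{p}}$ and the chain-rule term then yields exactly the claimed identity.

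I expect the main obstacle to be purely bookkeeping: keeping the conjugation and the non-commutativity of quaternion multiplication straight, and verifying that $\text{Ad}_{q_e}\bm{p}_d^*$ genuinely remains a vector quaternion so that the cross-product identity applies --- this holds because the adjoint action of a unit quaternion preserves the vanishing scalar part. No argument beyond Lemma~\ref{lemma:lemma1} and this algebraic identity is needed.
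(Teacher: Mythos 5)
Your proof is correct and takes essentially the same route as the paper's: apply the product rule to the triple product $q_e\circ\bm{p}_d^*\circ q_e^*$, substitute the kinematics $\dot{q}_e = \tfrac{1}{2}\bm{\omega}_e\circ q_e$ from Lemma~1 together with the chain rule $\tfrac{d}{dt}\bm{p}_d^* = \dot{\theta}\,\mathring{\bm{p}}_d^*$, and collapse the two remaining quaternion products into a cross product using $\bm{\omega}_e^* = -\bm{\omega}_e$. If anything, your last step is tidier than the paper's: the paper justifies the passage to the cross product by asserting that $\bm{\omega}_e$ is perpendicular to $\text{Ad}_{q_e}\bm{p}_d^*$ (a claim that is neither true in general nor needed), whereas your commutator identity $\bm{a}\circ\bm{b} - \bm{b}\circ\bm{a} = 2\,\bm{a}\cross\bm{b}$ for vector quaternions delivers the same conclusion without any perpendicularity assumption.
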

\begin{proof}
    Derivating the position error in time
    \begin{equation*}
        \begin{split}
        \dot{\bm{p}}_e(t) =&\, \bm{p}(t) + \dot{q}_e(t)\circ\bm{p}_d^*(\theta(t))\circ q_e^*(t) +\\
        &q_e(t)\circ \dot{\theta}(t)\mathring{\bm{p}}_d^*(\theta(t))\circ q_e^*(t) + \\
        &q_e(t)\circ\bm{p}_d^*(\theta(t))\circ \dot{q}_e^*(t)\,,
        \end{split}
    \end{equation*}
    which combined with Lemma~\ref{lemma:lemma1} leads to
    \begin{equation*}
        \begin{split}
        \dot{\bm{p}}_e(t) =&\, \bm{p}(t) + \frac{1}{2}\left(\bm{\omega}_e(t)\circ q_e(t) \circ \bm{p}_d^*(\theta(t))\circ q_e^*(t) +\right.\\
        &\left.q_e(t)\circ\bm{p}_d^*(\theta(t))\circ q_e^*(t)\circ \bm{\omega}_e^*(t)\right)+\\
        &q_e(t)\circ \dot{\theta}(t)\mathring{\bm{p}}_d^*(\theta(t))\circ q_e^*(t)\,,\\
        \end{split}        
    \end{equation*}
    and noticing that $\bm{\omega}_e^*(t) = -\bm{\omega}_e(t)$
    \begin{equation*}
        \begin{split}
        \dot{\bm{p}}_e(t) =&\, \bm{p}(t) + \frac{1}{2}\left(\bm{\omega}_e(t)\circ \text{Ad}_{q_e(t)}\bm{p}_d^*(\theta(t)) -\right.\\
        &\left.\text{Ad}_{q_e(t)}\bm{p}_d^*(\theta(t))\circ \bm{\omega}_e(t)\right)+\\
        &\dot{\theta}(t)\text{Ad}_{q_e(t)}\mathring{\bm{p}}_d^*(\theta(t))\,.\\
        \end{split}        
    \end{equation*}
    Given that for vector quaternions $q_1\circ q_2 = [-q_1q_2,\,q_1\cross q_2]$ and noticing that $\bm{\omega}_e(t)$ is perpendicular to $\text{Ad}_{q_e(t)}\bm{p}_d^*(\theta(t))$,
    \begin{equation*}
        \begin{split}
        \dot{\bm{p}}_e(t) =&\, \bm{p}(t) + \bm{\omega}_e(t)\circ \text{Ad}_{q_e(t)}\bm{p}_d^*(\theta(t))+\\
        &\dot{\theta}(t)\text{Ad}_{q_e(t)}\mathring{\bm{p}}_d^*(\theta(t))\,,\\
        \end{split}        
    \end{equation*}
    or equivalently,
    \begin{equation*}
        \begin{split}
        \dot{\bm{p}}_e(t) =&\, \bm{p}(t) + \bm{\omega}_e(t)\cross \text{Ad}_{q_e(t)}\bm{p}_d^*(\theta(t))+\\
        &\dot{\theta}(t)\text{Ad}_{q_e(t)}\mathring{\bm{p}}_d^*(\theta(t))\,.\\
        \end{split}        
    \end{equation*}    
\end{proof}
\begin{lemma}\label{lemma:lemma3}
The following statement is true
\begin{equation*}
    \begin{split}
    \bm{p}_e(t)\cross\bm{\omega}_e(t) =& \bm{p}(t)\cross\bm{\omega}(t) +\\
    &\dot{\theta}(t)\text{Ad}_{q_e(t)}\bm{\omega}_d^*(\theta(t)) +\\
    &\text{Ad}_{q_e(t)}\bm{\omega}_d^*(\theta(t)) \cross \bm{\omega}_e(t)
    \end{split}
\end{equation*}
\end{lemma}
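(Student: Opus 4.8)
The plan is to establish the identity by substituting the error definitions into the cross product and then reorganizing the result with elementary vector and adjoint identities, so that Lemma~\ref{lemma:lemma3} can be combined with Lemmas~\ref{lemma:lemma1} and~\ref{lemma:lemma2} to verify the dual-twist structure in~\eqref{eq:we}. First I would insert the position error $\bm p_e(t) = \bm p(t) + \text{Ad}_{q_e(t)}\bm p_d^*(\theta(t))$ together with the twist error $\bm\omega_e(t) = \bm\omega(t) + \dot\theta(t)\,\text{Ad}_{q_e(t)}\bm\omega_d^*(\theta(t))$ from Lemma~\ref{lemma:lemma1} into the left-hand side and expand $\bm p_e\times\bm\omega_e$ bilinearly. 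This isolates the leading term $\bm p(t)\times\bm\omega(t)$, which already matches the claimed right-hand side, plus mixed terms coupling the body quantities with the adjoint-transformed reference quantities.

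The key steps are then purely in regrouping these mixed terms. I would use the antisymmetry $\bm u\times\bm v = -\bm v\times\bm u$ to orient every remaining term against $\bm\omega_e$, and exploit that $\text{Ad}_{q_e}\bm\omega_d^*\times\text{Ad}_{q_e}\bm\omega_d^* = \bm 0$, so that $\text{Ad}_{q_e}\bm\omega_d^*\times\bm\omega_e = \text{Ad}_{q_e}\bm\omega_d^*\times\bm\omega$; this is what lets the final term of the statement absorb one of the expansion terms. To handle the contributions carrying the position adjoint $\text{Ad}_{q_e}\bm p_d^*$, I would use that $\text{Ad}_{q_e}$ acts as an $\mathrm{SO}(3)$ rotation on the vector parts and therefore commutes with the cross product, i.e. $\text{Ad}_{q_e}(\bm p_d\times\bm\omega_d)=(\text{Ad}_{q_e}\bm p_d)\times(\text{Ad}_{q_e}\bm\omega_d)$, tying the two adjoint-transformed quantities together so that the leftover cross terms can collapse onto $\dot\theta\,\text{Ad}_{q_e}\bm\omega_d^*$.

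The main obstacle I anticipate is the reconciliation of the position-adjoint terms, since $\text{Ad}_{q_e}\bm p_d^*$ does not appear explicitly in the target expression and must cancel rather than be retained. The natural way to see this cancellation is to recognize that the $\bm p_e\times(\cdot)$ structure originates from the translation encoded in the dual quaternion $\hat q_e = q_e + \tfrac{\epsilon}{2}\,p_e\circ q_e$: the dual (translational) part of the dual-quaternion adjoint $\text{Ad}_{\hat q_e}\hat\omega_d^*$ carries a commutator $\tfrac12\bigl(p_e\circ\text{Ad}_{q_e}\bm\omega_d^* - \text{Ad}_{q_e}\bm\omega_d^*\circ p_e\bigr)$, which by the vector-quaternion identity $a\circ b - b\circ a = 2\,\bm a\times\bm b$ equals $\bm p_e\times\text{Ad}_{q_e}\bm\omega_d^*$. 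Matching this coupling against the expanded cross product, and carefully tracking the sign bookkeeping introduced by the conjugates $\bm p_d^*$ and $\bm\omega_d^*$, is the step I would verify most carefully before declaring the identity closed.
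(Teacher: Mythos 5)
Your opening move --- substituting $\bm{p}_e(t) = \bm{p}(t) + \text{Ad}_{q_e(t)}\bm{p}_d^*(\theta(t))$ and $\bm{\omega}_e(t) = \bm{\omega}(t) + \dot{\theta}(t)\text{Ad}_{q_e(t)}\bm{\omega}_d^*(\theta(t))$ and expanding the cross product bilinearly --- is exactly the paper's proof, and in the paper it is the \emph{entire} proof: regrouping the four resulting terms gives
\begin{equation*}
    \bm{p}_e(t)\times\bm{\omega}_e(t) = \bm{p}(t)\times\bm{\omega}(t) + \dot{\theta}(t)\,\bm{p}(t)\times\text{Ad}_{q_e(t)}\bm{\omega}_d^*(\theta(t)) + \text{Ad}_{q_e(t)}\bm{p}_d^*(\theta(t))\times\bm{\omega}_e(t)\,,
\end{equation*}
where the last term collects both $\text{Ad}_{q_e}\bm{p}_d^*\times\bm{\omega}$ and $\dot{\theta}\,\text{Ad}_{q_e}\bm{p}_d^*\times\text{Ad}_{q_e}\bm{\omega}_d^*$. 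Note that this differs from the lemma as printed: the printed second term is missing the factor $\bm{p}(t)\times$ (as written it is not even a cross product, and is dimensionally inconsistent with the other terms), and the printed third term has $\bm{\omega}_d^*$ where $\bm{p}_d^*$ belongs. These are typos --- setting $\bm{\omega}_d = 0$ reduces the printed claim to $\left(\bm{p}+\text{Ad}_{q_e}\bm{p}_d^*\right)\times\bm{\omega} = \bm{p}\times\bm{\omega}$, which fails for generic $\bm{p}_d$. The corrected form above is also the one the appendix Theorem actually uses: summed with Lemma~2, the retained term $\text{Ad}_{q_e}\bm{p}_d^*\times\bm{\omega}_e$ cancels against $\bm{\omega}_e\times\text{Ad}_{q_e}\bm{p}_d^*$ by antisymmetry, yielding the expression for $\dot{\bm{p}}_e + \bm{p}_e\times\bm{\omega}_e$.

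This is where your plan breaks down: you take the printed statement at face value and therefore try to (a) collapse the mixed cross terms onto the bare vector $\dot{\theta}\,\text{Ad}_{q_e}\bm{\omega}_d^*$ and (b) make $\text{Ad}_{q_e}\bm{p}_d^*$ cancel entirely. Neither step can succeed. The bilinear expansion produces only cross products, and no identity among antisymmetry, $\text{Ad}_{q_e}(\bm{a}\times\bm{b}) = \text{Ad}_{q_e}\bm{a}\times\text{Ad}_{q_e}\bm{b}$, or the like can turn $\dot{\theta}\,\bm{p}\times\text{Ad}_{q_e}\bm{\omega}_d^*$ and $\text{Ad}_{q_e}\bm{p}_d^*\times\bm{\omega}$ into a term with no cross product at all --- the target is false, not merely hard. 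Likewise, $\text{Ad}_{q_e}\bm{p}_d^*$ must be \emph{retained} in the lemma, not cancelled; its cancellation happens one level up, in the Theorem, against Lemma~2. Your observation that $\text{Ad}_{q_e}\bm{\omega}_d^*\times\bm{\omega}_e = \text{Ad}_{q_e}\bm{\omega}_d^*\times\bm{\omega}$ is correct but addresses the typo'd third term rather than the actual one, and the dual-quaternion commutator machinery you invoke ($\tfrac{1}{2}\left(p_e\circ v - v\circ p_e\right)$ giving $\bm{p}_e\times\bm{v}$) belongs to the Theorem's expansion of $\text{Ad}_{\hat{q}_e}\hat{\omega}_d^*$, not to this lemma, which is pure $\mathbb{R}^3$ vector algebra. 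To your credit, your final paragraph correctly senses that the position-adjoint terms are the sticking point --- the resolution is simply that the stated identity should be corrected to the display above, at which point your first step already completes the proof in two lines.
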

\begin{proof}
    Expanding the cross product with the definitions in Lemmas~\ref{lemma:lemma1} and~\ref{lemma:lemma2},
    \begin{equation*}
        \begin{split}
            \bm{p}_e(t)\cross\bm{\omega}_e(t) =& \left[\bm{p}(t) + \text{Ad}_{q_e(t)}\bm{p}_d^*(\theta(t))\right]\cross\\
            &\left[\bm{\omega}(t) + \dot{\theta}(t)\text{Ad}_{q_e(t)}\bm{\omega}_d^*(\theta(t))\right]\,,
        \end{split}
    \end{equation*}
    which further simplifies into
    \begin{equation*}
        \begin{split}
            \bm{p}_e(t)\cross\bm{\omega}_e(t) =&\, \bm{p}(t)\cross\bm{\omega}(t) +\\
            &\dot{\theta}(t)\bm{p}(t)\cross\text{Ad}_{q_e(t)}\bm{\omega}_d^*(\theta(t)) +\\
            &\text{Ad}_{q_e(t)}\bm{p}_d^*(\theta(t))\cross\bm{\omega}_e(t)\,.
        \end{split}
    \end{equation*}
\end{proof}

\noindent Using these three Lemmas we state the following theorem:
\begin{theorem}\%label{lemma:lemma3}
The dual twist error is given by
\begin{equation*}
    \hat{\omega}_e(t) = \left[0,\bm{\omega}_e(t)\right] + \epsilon\left[0, \dot{\bm{p}}_e(t) + \bm{p}_e(t)\cross\bm{\omega}_e(t)\right]\,.
\end{equation*}
\end{theorem}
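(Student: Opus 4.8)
The plan is to verify the claimed form of $\hat{\omega}_e(t)$ by returning to its definition in~\eqref{eq:dte}, namely $\hat{\omega}_e = \hat{\omega} + \dot{\theta}\,\text{Ad}_{\hat{q}_e}\hat{\omega}_d^*$, and matching its real and dual parts separately against the asserted structure. First I would decompose the dual-quaternion adjoint $\text{Ad}_{\hat{q}_e}\hat{\omega}_d^* = \hat{q}_e\circ\hat{\omega}_d^*\circ\hat{q}_e^*$ into real and dual components. Writing $\hat{q}_e = q_e + \epsilon\tfrac{1}{2}p_e\circ q_e$ as in~\eqref{eq:dq} and taking $\hat{\omega}_d^*$ from the conjugate of~\eqref{eq:dt_ph}, and repeatedly invoking $\epsilon^2=0$ together with the dual-quaternion product rule, the \emph{real} component collapses to the ordinary quaternion adjoint $\text{Ad}_{q_e}\bm{\omega}_d^*$. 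Consequently the real (vector-quaternion) part of $\hat{\omega}_e$ is $\bm{\omega} + \dot{\theta}\,\text{Ad}_{q_e}\bm{\omega}_d^*$, which is exactly $\bm{\omega}_e$ as identified in Lemma~\ref{lemma:lemma1}; hence the real part equals $[0,\bm{\omega}_e]$, matching the statement.

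It then remains to show that the dual part equals $\dot{\bm{p}}_e + \bm{p}_e\cross\bm{\omega}_e$. I would assemble this target directly from the earlier lemmas: Lemma~\ref{lemma:lemma2} supplies $\dot{\bm{p}}_e$ and Lemma~\ref{lemma:lemma3} supplies $\bm{p}_e\cross\bm{\omega}_e$. Upon adding them, the pair $\bm{\omega}_e\cross\text{Ad}_{q_e}\bm{p}_d^*$ and $\text{Ad}_{q_e}\bm{p}_d^*\cross\bm{\omega}_e$ cancels by antisymmetry of the cross product, leaving the compact expression $\dot{\bm{p}} + \bm{p}\cross\bm{\omega} + \dot{\theta}\big(\text{Ad}_{q_e}\mathring{\bm{p}}_d^* + \bm{p}\cross\text{Ad}_{q_e}\bm{\omega}_d^*\big)$.

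To close the argument I would compare this target against the dual part of~\eqref{eq:dte}. The dual part of $\hat{\omega}$ contributes exactly $\dot{\bm{p}} + \bm{p}\cross\bm{\omega}$ by~\eqref{eq:dt}, so it suffices to confirm that the dual part of $\dot{\theta}\,\text{Ad}_{\hat{q}_e}\hat{\omega}_d^*$ equals $\dot{\theta}\big(\text{Ad}_{q_e}\mathring{\bm{p}}_d^* + \bm{p}\cross\text{Ad}_{q_e}\bm{\omega}_d^*\big)$. This last identification is the main obstacle, as it is the only step requiring the full dual-part expansion of the adjoint: the translational (dual) component of $\hat{q}_e$, which encodes $p_e$, couples to both the rotational and translational parts of $\hat{\omega}_d^*$, producing a term $p_e\cross\text{Ad}_{q_e}\bm{\omega}_d^*$ alongside the adjoint of the translational part of $\hat{\omega}_d^*$. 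I would then substitute $p_e = p + \text{Ad}_{q_e}p_d^*$, exploit that the quaternion adjoint preserves cross products (so $\text{Ad}_{q_e}a\cross\text{Ad}_{q_e}b = \text{Ad}_{q_e}(a\cross b)$), and recognize from~\eqref{eq:dt_ph} that the translational part of $\hat{\omega}_d$ is $\mathring{\bm{p}}_d + \bm{p}_d\cross\bm{\omega}_d$. The $\bm{p}_d\cross\bm{\omega}_d$ contribution then cancels against the cross-term, leaving precisely $\text{Ad}_{q_e}\mathring{\bm{p}}_d^*$ and $\bm{p}\cross\text{Ad}_{q_e}\bm{\omega}_d^*$. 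Once the two dual parts agree, the real and dual parts together give the claimed form of $\hat{\omega}_e$; everything beyond the adjoint expansion is routine algebra.
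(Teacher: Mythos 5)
Your proposal is correct and takes essentially the same route as the paper's own proof: both work from the definition $\hat{\omega}_e = \hat{\omega} + \dot{\theta}\,\text{Ad}_{\hat{q}_e}\hat{\omega}_d^*$, identify the real part with $\bm{\omega}_e$ via Lemma~\ref{lemma:lemma1}, expand the dual part of the dual-quaternion adjoint to $\bm{p}\cross\text{Ad}_{q_e}\bm{\omega}_d^* + \text{Ad}_{q_e}\mathring{\bm{p}}_d^*$, and match it against the combination of Lemmas~\ref{lemma:lemma2} and~\ref{lemma:lemma3}, in which the $\text{Ad}_{q_e}\bm{p}_d^*$ cross terms cancel by antisymmetry. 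The only difference is one of ordering and explicitness—you build the target $\dot{\bm{p}}_e + \bm{p}_e\cross\bm{\omega}_e$ first and then verify the adjoint, and you spell out (via $p_e = p + \text{Ad}_{q_e}p_d^*$, the adjoint preserving cross products, and the cancellation of the $\bm{p}_d\cross\bm{\omega}_d$ contribution) precisely the step the paper compresses into ``after some derivations.''
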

\begin{proof}
    The proof consists on combining the dual twist error in~\eqref{eq:dte}
    \begin{equation*}
        \hat{\omega}_e(t) = \hat{\omega}(t) + \dot{\theta}(t)\text{Ad}_{\hat{q}_e(t)}\hat{\omega}_d^*(\theta(t))\,.
    \end{equation*}    
    with Lemmas~\ref{lemma:lemma1}, ~\ref{lemma:lemma2} and~\ref{lemma:lemma3}. First, expanding the second term follows as
    \begin{equation*}
        \begin{split}
        &\text{Ad}_{\hat{q}_e(t)}\hat{\omega}_d^*(\theta(t)) = \begin{bmatrix}q_e(t)\\\frac{1}{2}\bm{p}_e(t)\circ q_e(t) \end{bmatrix}\circ\\
        &\begin{bmatrix}\omega_d^*(\theta(t))\\\left(\mathring{\bm{p}}_d(\theta(t)) + \bm{p}_d(\theta(t))\cross\bm{\omega}_d(\theta(t))\right)^*\end{bmatrix}\circ \begin{bmatrix}q_e^*(t)\\\frac{1}{2}q_e^*(t)\circ\bm{p}_e^*(t) \end{bmatrix}\,, 
        \end{split}
    \end{equation*}
    which, after some derivations, can be further simplified into
\begin{equation*}
        \begin{split}
        &\text{Ad}_{\hat{q}_e(t)}\hat{\omega}_d^*(\theta(t)) =\\
        &\begin{bmatrix}
            \text{Ad}_{\hat{q}_e(t)}\bm{\omega}_d^*(\theta(t))\\
            \bm{p}(t)\cross \text{Ad}_{\hat{q}_e(t)}\bm{\omega}_d^*(\theta(t)) + \text{Ad}_{\hat{q}_e(t)}\mathring{\bm{p}}_d^*(\theta(t))
        \end{bmatrix}\,.
        \end{split}
\end{equation*}
Getting back to the dual twist error in~\eqref{eq:dte} and replacing its second term by the expression above, and the first one by the definition of the dual twist in~\eqref{eq:dt}, leads to
\begin{equation*}
    \begin{split}
    &\hat{\omega}_e(t) = \bm{\omega}(t) + \dot{\theta}(t)\text{Ad}_{\hat{q}_e(t)}\bm{\omega}_d^*(\theta(t)) +\\
    &\epsilon[\dot{\bm{p}}(t) + \bm{p}(t)\cross\bm{\omega}(t)+\\
    &\dot{\theta}(t)\left(\bm{p}(t)\cross\text{Ad}_{\hat{q}_e(t)}\bm{\omega}_d^*(\theta(t)) + \text{Ad}_{\hat{q}_e(t)}\mathring{\bm{p}}_d^*(\theta(t))\right)]\,.
    \end{split}
\end{equation*}
Combining the real part with Lemma~\ref{lemma:lemma1} and leveraging the following statement resulting from Lemmas~\ref{lemma:lemma2} and~\ref{lemma:lemma3} within the dual part results in
\begin{equation*}
\begin{split}
& \dot{\bm{p}}_e(t) + \bm{p}_e(t) \cross \bm{\omega}_e(t) = \dot{\bm{p}}(t) + \bm{p}(t)\cross\bm{\omega}(t)+\\
&\dot{\theta}(t) \left(\text{Ad}_{q_e(t)}\mathring{\bm{p}}_d^*(\theta(t)) + \bm{p}(t)\cross \text{Ad}_{q_e(t)}\bm{\omega}_d^*(\theta(t))\right)\,,
\end{split}
\end{equation*}
and therefore, the dual twist error can be expressed as
\begin{equation*}
    \hat{\omega}_e(t) = \left[0,\bm{\omega}_e(t)\right] + \epsilon\left[0, \dot{\bm{p}}_e(t) + \bm{p}_e(t)\cross\bm{\omega}_e(t)\right]\,.
\end{equation*}
\end{proof}
\end{appendices}
\fi

\end{document}